\renewcommand{\vector}[1]{\mbox{\boldmath$#1$}}
\renewcommand{\vec}[1]{\mathbf{#1}}
\newcommand{\btau}   {\mbox{\boldmath $\tau$}}
\newcommand{\vom} {\boldsymbol{\omega}}
\newcommand{\bI}{\mbox{\boldmath $I$}}
\newcommand{\bR}{\mbox{\boldmath $R$}}
\newcommand{\bS}{\mbox{\boldmath $S$}}
\newcommand{\bc}{\mbox{\boldmath $c$}}
\newcommand{\bp}{\mbox{\boldmath $p$}}
\newcommand{\bzero}{\mbox{\boldmath $0$}}
\newcommand{\vg}{\vec{g}}
\newcommand{\vI}{\vec{I}}
\newcommand{\BM}{\begin{bmatrix}}
\newcommand{\EM}{\end{bmatrix}}
\newcommand{\T}{^{\!\top}}
\newcommand{\vPhi}{\boldsymbol{\Phi}}
\newcommand{\vPhid}{\dot{\vPhi}}
\newcommand{\bone}{{\bf 1}}
\newcommand{\vA}{\vec{A}}
\newcommand{\va}{\vec{a}}
\newcommand{\vB}{\vec{B}}
\newcommand{\vC}{\vec{C}}
\newcommand{\vD}{\vec{D}}
\newcommand{\dd}[2]{\frac{{\rm d}#1}{{\rm d}#2}}
\newcommand{\ddt}{\dd{}{t}}
\newcommand{\vF}{\vec{F}}
\newcommand{\vf}{\vector{f}}
\newcommand{\Bf}{{\bf f}}
\newcommand{\vH}{\vec{M}}
\newcommand{\vJ}{\vec{J}}
\newcommand{\vn}{\boldsymbol{n}}
\newcommand{\vq}{\vec{q}}
\newcommand{\vqd}{\dot{\vq}}
\newcommand{\vqdd}{\ddot{\vq}}
\newcommand{\qd}{\dot{q}}
\newcommand{\vv}{\vector{v}}
\newcommand{\bv}{{\bf v}}
\newcommand{\ba}{{\bf a}}
\newcommand{\vX}{\vec{X}}
\newcommand{\XM}[2]{{}^{#1}\vX_{#2}}
\newcommand{\XR}[2]{{}^{#1}\bR_{#2}}
\newcommand{\vx}{\vec{x}}
\newcommand{\vy}{\vec{y}}
\newcommand{\vz}{\vec{z}}
\newcommand{\beq}{\begin{equation}}
\newcommand{\eeq}{\end{equation} }
\newcommand{\rel}{\alpha}
\newcommand{\vPhiDot}{\dot{\vPhi}}
\newcommand{\BF}{\mathbf F}
\newcommand{\Mot}{\mathcal{M}}
\newcommand{\For}{\mathcal{F}}
\newcommand{\M}{\mathcal{M}}
\newcommand{\F}{\mathcal{F}}
\newcommand{\I}{\mathcal{I}}
\newcommand{\crff}{\,\overline{\!\times\!}{}^{\,*}}
\newcommand{\crf}{\times^*}
\newcommand{\crm}{\times}
\newcommand{\before}[1]{\,\preceq\, #1}
\newcommand{\after}[1]{\,\succeq\, #1}
\renewcommand{\rel}[1]{\,\sim\, #1}
\newcommand{\sumbefore}[2]{#1 \before{#2}}
\newcommand{\sumafter}[2]{#1\after{#2}}
\newcommand{\sumrel}[2]{#1\rel{#2}}
\newcommand{\f}{\Bf}
\renewcommand{\v}{\bv}
\newcommand{\w}{\mathbf{w}}
\newcommand{\Rn}{\mathbb{R}^n}
\newcommand{\Rnn}{\mathbb{R}^{n\times n}}
\newcommand{\R}{\mathbb{R}}
\newcommand{\C}{\vC}
\renewcommand{\H}{\vH}
\newcommand{\g}{\mathbf{g}}
\newcommand{\q}{\mathbf{q}}
\renewcommand{\qd}{\dot{\q}}
\newcommand{\Hd}{\dot{\H}}
\newcommand{\V}{\mathcal{V}}
\newcommand{\W}{\mathcal{W}}
\newcommand{\Vstar}{\V^*}
\newcommand{\Wstar}{\W^*}
\newcommand{\A}{\mathbf{A}}
\renewcommand{\w}{\mathbf{w}}
\renewcommand{\x}{\vx}
\newcommand{\z}{\vz}
\newcommand\openbigstar[1][0.7]{%
  \scalebox{.75}{\scalerel*{%
    \stackinset{c}{-.125pt}{c}{}{\scalebox{#1}{\color{white}{$\bigstar$}}}{%
      $\bigstar$}%
  }{\bigstar}}
}
\newcommand{\changeCoord}[1]{\overline{#1}}
\newcommand{\Chris}[1]{#1^{\openbigstar[.5]}}
\newcommand{\vCc}{\Chris{\vC}}
\newcommand{\vqh}{\changeCoord{\vq}}
\newcommand{\Cchat}{\changeCoord{C}{}\Chris{}}
\newcommand{\vChat}{\changeCoord{\vC}}
\newcommand{\pq}[2]{ \frac{\partial q_{#1}}{\partial \changeCoord{q}_{#2}}}
\newcommand{\vCchat}{\changeCoord{\vC}{}\Chris{}}
\newcommand{\Hhat}{\changeCoord{\H}}
\newcommand{\Gammahat}{\changeCoord{\Gamma}}
\newcommand{\Cc}{\Chris{C}}
\newcommand{\vBtmp}{\tilde{\vB}}
\title{Numerical Methods to Compute the Coriolis Matrix and Christoffel Symbols for Rigid-Body Systems
\vspace{-15px}
}
\author{Sebastian Echeandia
    \affiliation{
	Aerospace and Mechanical Engineering\\
        University of Notre Dame\\
        Notre Dame, Indiana, 46556
        \vspace{-5px}
    }	
}
\author{Patrick M. Wensing\thanks{Corresponding Author} 
    \affiliation{Assistant Professor, Member of ASME\\
        Aerospace and Mechanical Engineering\\
        University of Notre Dame\\
        Notre Dame, Indiana, 46556\\
         Email: pwensing@nd.edu
         \vspace{-5px}
    }
}
\begin{document}
\pagestyle{plain}

\maketitle

\newtheorem{proposition}{Proposition}
\newtheorem{definition}{Definition}
\newtheorem{corollary}{Corollary}
\newtheorem{theorem}{Theorem}
\newtheorem{remark}{Remark}
\newtheorem{lemma}{Lemma}

\newtheorem{property}{Property}

\newtheorem{note}{Note}

\setlength{\abovedisplayskip}{10pt plus0pt minus2pt}
\setlength{\abovedisplayshortskip}{\abovedisplayskip} 
\setlength{\belowdisplayshortskip}{\abovedisplayskip}
\setlength{\belowdisplayskip}{\abovedisplayskip}

\DeclarePairedDelimiter{\ceil}{\lceil}{\rceil}

\begin{abstract}
{\it This article presents  methods to efficiently compute the Coriolis matrix and underlying Christoffel symbols (of the first kind) for tree-structure rigid-body systems. The algorithms can be executed purely numerically, without requiring partial derivatives as in unscalable symbolic techniques. The computations share a recursive structure in common with classical methods such as the Composite-Rigid-Body Algorithm and are of the lowest possible order: $O(Nd)$ for the Coriolis matrix and $O(Nd^2)$ for the Christoffel symbols, where $N$ is the number of bodies and $d$ is the depth of the kinematic tree. Implementation in C/C++ shows computation times on the order of 10-20 $\mu$s for the  Coriolis matrix and 40-120 $\mu$s for the Christoffel symbols on systems with 20 degrees of freedom. The results demonstrate feasibility for the adoption of these algorithms within high-rate ($>$1kHz) loops for model-based control applications.}
\end{abstract}


\vspace{20px}
\section{Introduction}

Rigid-body dynamics algorithms have evolved to be an important component of many model-based robot control strategies. Most commonly, algorithms focus on computing the inverse dynamics of a system, or components of its equations of motion \cite{Featherstone08,Walker82} (e.g., the mass matrix, generalized Coriolis force, generalized gravity force, etc.). These results are then used to select joint torques that achieve desired movements or contact forces, with applications from computed torque control of manipulators to whole-body control of legged robots \cite{Abe07,Park07,Wensing13,Kuindersma15}. More recently, interest has increased toward computing other components of the equations of motion for application in disturbance observer problems \cite{DeLuca09,bledt2018contact} or the numerical calculation of partial derivatives for application to gradient-based motion optimization \cite{Bobrow01,JainRodriguez93,Suleiman08}.

This paper derives a new algorithm for the calculation of the Coriolis matrix.\footnote{Post-publication note: The authors wish to acknowledge a fundamentally similar version of the algorithm that was brought to their attention in January 2022. Algorithm 1 herein was originally documented privately in 2014 \cite{WensingNotes14} and 2017 \cite{wensingweb}, and was published in 2021. A closely related Coriolis matrix algorithm is available in Pinocchio \cite{carpentier2019pinocchio}. This independently derived method was developed in 2017 and released publicly in 2018 \cite{pinocchioweb}, but is not otherwise documented in the literature. The algorithms have the same conceptual operation. Footnote 2 provides additional detail.\\[.5ex] }  The algorithm mirrors well-established efficient algorithms for computing the mass matrix \cite{Walker82,Featherstone08}, and builds from formulas appearing in the adaptive control literature \cite{Lin95,Wang10,WangH12}. 
 From this Coriolis matrix algorithm, we also derive a new method for calculating the Christoffel symbols of the first kind. The final algorithm can be run purely numerically and does not require any symbolic partial derivatives.  A related algorithm was recently proposed in \cite{Safeea19} that is applicable for kinematic chains with revolute joints. By adopting a coordinate-free development, the resulting algorithms in this paper have additional generality in terms of application to branched systems or with more general joint models (e.g., prismatic joints). These numerical methods for the Christoffel symbols could have broad relevance for geometric control algorithms, the calculation of second-order partial derivatives of the inverse dynamics model, or in other geometric methods. 

The equations of motion of a rigid-body system can be written as
\begin{equation}
\H(\vq) \vqdd + \C(\q,\qd)\qd + \g(\q) = \btau\,,
\label{eq:eom}
\end{equation}
where $\q\in \Rn$ are the generalized coordinates, $\vH(\vq)\in\Rnn$ is the mass matrix, $\vC(\vq,\vqd)\in \Rnn$ a Coriolis matrix, $\vg(\vq) \in \Rn$ the generalized gravity force, and $\btau\in\Rn$ the generalized applied force (often simply the vector of actuator torques for a robot with revolute joints). It is well known that there are many possible choices of $\vC(\vq,\vqd)$ that provide the correct dynamics. All such choices satisfy $\vqd\T{\big[} \Hd(\q,\qd) -2\C(\q,\qd) {\big]}\qd = 0$ \cite{siciliano2010robotics}, while many satisfy
\begin{equation}
\boldsymbol{\eta}\T {\big[} \Hd(\q,\qd) -2\C(\q,\qd) {\big]}\boldsymbol{\eta} = 0 \quad \forall,\vq,\vqd,\boldsymbol{\eta}\in\Rn\,.
\label{eq:skew_property}
\end{equation}
This condition \eqref{eq:skew_property} is the same as requiring $\Hd-2\C$ to be skew symmetric, or equivalently that $\Hd = \C+\C\T$. A special choice that satisfies \eqref{eq:skew_property}, denoted $\vCc$, is given by:
\begin{align}
{\big[}\vCc(\vq,\vqd){\big]}_{ij} &= \sum_k \Gamma_{ijk}(\q)\, \dot{q}_k\quad \textrm{, where} \label{eq:ChristoffelC}\\
\Gamma_{ijk}(\q) &= \frac{1}{2} \left[ \frac{\partial M_{ij}}{\partial q_k} + \frac{\partial M_{ik}}{\partial q_j} - \frac{\partial M_{jk}}{\partial q_i} \right]
\label{eq:GammaEq}
\end{align}
are the Christoffel symbols of the first kind \cite{siciliano2010robotics}. Note that the symbols satisfy $\Gamma_{ijk} = \Gamma_{ikj}$ since $\vH$ is symmetric. 

\begin{definition}[Christoffel-Consistent Coriolis Factorization]
Consider a system with mass matrix $\vH(\q)$ and the Coriolis matrix $\vCc(\q,\qd)$ given in \eqref{eq:ChristoffelC}. The matrix $\vCc(\q,\vqd)$ is named the Christoffel-consistent Coriolis factorization.
\end{definition}

\begin{definition}[Valid Coriolis Factorization] Consider a system and its Christoffel-consistent Coriolis factorization $\vCc(\q,\qd)$. A matrix valued function $\C(\q,\qd)$ is said to be a {\em valid Coriolis factorization} if for all $\vq,\vqd\in\Rn$
\[
\vCc(\vq,\vqd)\qd = \C(\vq,\vqd)\qd\,.
\]
\end{definition}
\noindent A valid Coriolis factorization is one that gives the correct equations of motion when used in \eqref{eq:eom}. Yet, the additional property \eqref{eq:skew_property} is commonly needed in control \cite{Slotine87} and contact detection \cite{DeLuca09,bledt2018contact},  motivating the following stricter definition.

\begin{definition}[Admissible Coriolis Factorization] Consider a system with mass matrix $\vH(\q)$. A matrix valued function $\C(\q,\qd)$ is said to be an {\em admissible factorization} if 
\begin{enumerate}
\item $\C(\vq,\vqd)$ is a valid factorization
\item $\forall \vq,\vqd\in\Rn$, $\Hd(\vq,\vqd)-2\C(\vq,\vqd)$ is skew symmetric.
\end{enumerate}
\label{def:AdmissibleCoriolisFactorization}
\end{definition}

Although the definition for $\vCc$ in \eqref{eq:ChristoffelC} appears in many robotics textbooks  (e.g., \cite{siciliano2010robotics,lynch2017modern}), it is challenging to compute for complex systems, as the symbolic computation of $\vH$ becomes burdensome for systems with many degrees of freedom (DoF), and symbolic differentiation to form the Christoffels symbols via \eqref{eq:GammaEq} does not scale well with $n$. In this paper, we consider tailored numerical methods to compute admissible factorizations $\vC$, the Christoffel-consistent factorization $\vCc$, and the Christoffel symbols $\Gamma_{ijk}$ by taking advantage of the underlying structure of \eqref{eq:eom} for open-chain rigid-body systems. Since gravity does not affect the Coriolis terms, it is ignored in the remainder of the paper.

\section{Conventions and Notation}
This section introduces background material for modeling the dynamics of rigid-body systems. 
We first review vector space fundamentals before setting notation for describing kinematic connectivity. Spatial vector algebra \cite{Featherstone08} is then presented for kinematic and dynamic analysis. We note that spatial vector algebra is conceptually equivalent to a Lie-theoretic treatment of rigid-body dynamics \cite{Park95}, with their relationship detailed in \cite{Traversaro16b}.

\vspace{-5px}
\subsection{Vector Space Fundamentals}

Consider two vector spaces $\V$ and $\W$ and denote by $L(\V,\W)$ the vector space of linear operators from $\V$ to $\W$. The dual vector space $\Vstar$ is the set of linear functionals on $\V$, i.e., $\Vstar = L(\V,\R)$. For any $\vy \in \V^*$ and any $\mathbf{x} \in \V$, we denote the evaluation of the functional $\vy(\vx)$ as $\vy \bullet \vx$. 
Given any operator $\A \in L(\V,\W)$, denote by $\A^* \in L(\Wstar, \Vstar)$ the adjoint of $\A$, which is the unique operator satisfying
\[
\z \bullet [\A \x] = [\A^*\, \z] \bullet \x \quad \forall \z \in \Wstar, \x \in \V\,.
\]  
In the case when $\W = \V^*$, if $\A = \A^*$ then $\A$ is said to be self-adjoint. When bases and associated dual bases are adopted for $\V$, $\W$, $\Vstar$, and $\Wstar$, the matrix representation of $\A^*$ coincides with the transpose of the representation of $\A$. In this case, $\A$ is self-adjoint if and only if its matrix representation is symmetric.

\vspace{-5px}
\subsection{Modeling Connectivity}
A rigid-body system can be modeled as a set of $N$ bodies connected by a set of joints, each with up to six DoF. The topology of these connections can generally be described by a connectivity graph. Here, we restrict ourselves to the consideration of rigid-body trees and denote $d$ as the depth of the tree. Bodies are numbered from 1 through $N$ such that body $i$'s predecessor $p(i)$ toward the root is less than $i$. These parent/child relationships induce a partial order on the set $\{1,\ldots,N\}$, which is denoted using  a binary relation ``$\preceq$''. We say $j\preceq i$ if body $j$ is in the path from body $i$ to the root of the tree. In this case, $j$ is said to be an ancestor of $i$. If $i\succeq j$ or $j \succeq i$ then we say $i$ and $j$ are related, and denote this relationship by $i\sim j$.
\begin{figure}[tbp]
\centerline{\includegraphics[width=.45\columnwidth]{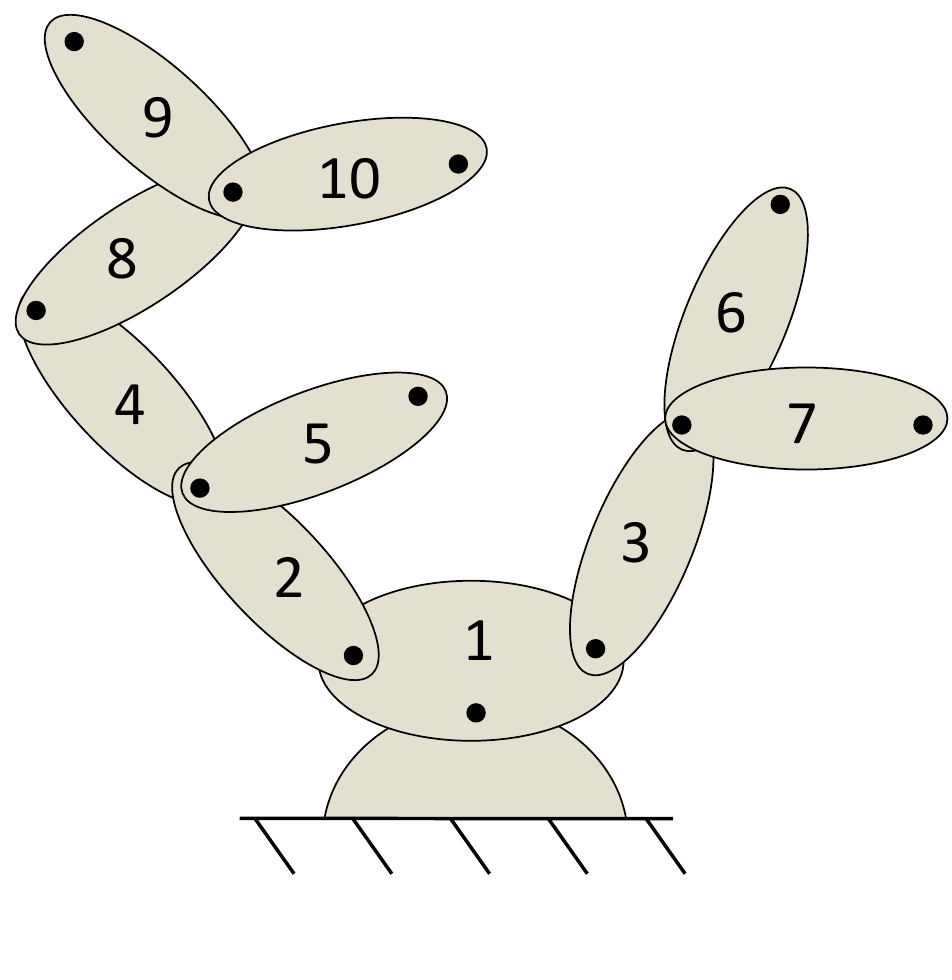}}
\vspace{-13px}
\caption{Sample body numbering for a branched system.}
\vspace{-10px}
\label{fig:support1}
\end{figure}

For any pair of relatives $i\sim j$, we denote $\ceil{ij}$ to be shorthand for the body closest to the leaves:
\[
\ceil{ij} = \begin{cases} 
i & \textrm{if~} i \after{j}  \\
j & \textrm{if~} j \after{i} \,.
        \end{cases}
\]
The $\ceil{ij}$ notation is best understood with an example. For the system in Figure~\ref{fig:support1} when $i=2$ and $j=10$ we have that $\ceil{ij}=10$. In contrast, $\ceil{ij}$ is undefined when $i=9$ and $j=5$ because neither body is a descendant of the other.

\subsection{Spatial Vector Algebra}
\label{sec:SVA}

This section presents spatial vector algebra for the development of algorithms in a coordinate-free manner. Importantly, this approach enables us to take derivatives of objects (vectors, inertias, etc.) in a coordinate-free sense, without needing to treat moving (rotating and/or translating) coordinate systems during algorithm derivation. The use of coordinates will then be an implementation detail handled after the main development. 

\vspace{3px}
\noindent {\bf Spatial Vectors:} The set of all rigid-body motion vectors forms a 6D vector space, denoted $\M$ \cite{Featherstone08}. Consider a body moving with spatial velocity $\bv\in\M$ and a Cartesian frame $A$. In coordinates, the expression of $\bv$ in frame $A$ is denoted
\[
{}^A\{\bv\} = \begin{bmatrix} {}^A \{ \boldsymbol{\omega} \} \\ {}^A \{ \vv_A \} \end{bmatrix}\,,
\]
where ${}^A\{\boldsymbol{\omega}\}$ denotes the angular velocity of the body and ${}^A\{\vv_A\}$ the linear velocity of the body-fixed point at the origin of $A$. Curly brackets $\{\cdot\}$ denote expression in coordinates, with the pre-superscript indicating the frame used. 

Suppose that body $i$ and its predecessor $p(i)$ are connected by a $d_i$ DoF joint with joint rates $\vqd_i \in \R^{d_i}$. Then, the spatial velocities of these bodies are related by \cite{Featherstone08}
\begin{equation}
\bv_i = \bv_{p(i)} + \vPhi_i \dot{\q}_i\,,
\label{eq:velocity_prop}
\end{equation}
where $\v_i,\v_{p(i)} \in \M$ represent the spatial velocities of bodies $i$ and $p(i)$, and  $\vPhi_i\in L( \R^{d_i}, \M)$ maps joint rates to joint velocities for the $i$-th joint. Collecting the configuration of each of the joints, generalized coordinates are chosen as
\begin{equation}
\q = [\q_1\T, \ldots, \q_{N}\T]\T\,.
\label{eq:gencoords}
\end{equation}

Spatial force vectors (i.e., force/moment pairs) are dual to spatial motion vectors, such that the vector space of spatial forces $\F$ is identified with $\M^*$. Given a body moving with velocity $\v\in\M$, the power delivered by a force $\f \in\F$ on the body is denoted by the ``dot product'' $\v\bullet \f$. 
Spatial inertias map motion vectors to force vectors and reside in a 10-dimensional subspace $\I \subset L(\M,\F)$. All inertias are self-adjoint, as their expression in coordinates is symmetric. See Appendix A for further detail.

\vspace{3px}
\noindent {\bf Spatial Equation of Motion:} The spatial equation of motion for each rigid body is given by
\begin{equation}
\Bf_i = \vI_i\, \ba_i + \bv_i \crf \vI_i \bv_i\, \label{eq:bodyForce}\,,
\end{equation}
where $\Bf_i \in \F$ is the net spatial force on body $i$, $\bv_i \in \Mot$ is its spatial velocity, $\ba_i \in \Mot$ its spatial acceleration, $\vI_i\in \I$ its spatial inertia, and $\times^*:\Mot\times\For\rightarrow\For$ the bi-linear cross-product operator between spatial motion vectors and spatial force vectors \cite{Featherstone08}. From an intuitive standpoint, the cross product $\bv \times^* \Bf$ gives the rate of change in $\Bf$ when any force field representing it moves with spatial velocity $\bv$. This spatial cross product generalizes the Cartesian formula $\dot{\mathbf{r}} = \boldsymbol{\omega} \times \mathbf{r}$ that describes the rate of change of a 3D vector $\mathbf{r}$ when rotating with angular velocity $\boldsymbol{\omega}$. Note that \eqref{eq:bodyForce} holds in coordinates for any frame, including when the frame origin does not coincide with the body's center of mass. 

\vspace{3px}
\noindent {\bf Cross Products:} Given any spatial velocity $\bv \in \Mot$, the motion/force cross product $\v\times^*\Bf$ can be used to define a unique linear operator $(\bv\times^*) \in L(\F,\F)$ such that $(\bv\times^*)\Bf = \bv \times^* \Bf$ for any force vector $\Bf$. 
We swap the order of the cross product arguments and denote $(\Bf \crff) \in L(\M,\F)$ as the unique linear operator satisfying
\beq
(\Bf \crff) \bv = (\bv\crf)\Bf\,. \nonumber
\eeq
See Appendix A for an expression in coordinates. Finally, we denote $(\bv \times)\in L(\M,\M)$ as the spatial motion/motion cross product defined from the adjoint of $(\bv\times^*)$ according to $(\v\crm) = - (\v \crf)^*$. That is, for any $\v,\w\in\M$ and $\f\in\F\!\!$
\beq
\left[ (\v \crm) \w\right ]\bullet \f   =  \w \bullet \left[ -(\v\crf) \f\right]
\label{eq:motionAndForceCrossRelationship}
\eeq
Since the adjoint corresponds to a transpose in coordinates, we adopt $\mathbf{A}{}\T$ in place of $\mathbf{A}^*$ as a matter of notation.

\vspace{3px}
\noindent {\bf Factorization of the Spatial Equation:} The bi-linear velocity-product term $\bv_i \crf \vI_i \bv_i$ in \eqref{eq:bodyForce} can be factorized in a variety of ways to take the form
\beq
\Bf_i = \vI_i \, \ba_i + \vB(\bv_i,\vI_i) \, \bv_i\,, \nonumber
\eeq
where $\vB(\bv_i, \vI_i) \in L(\M, \F)$. Since $\M^*=\F$, it follows that $\vB(\bv_i,\vI_i)\T \in L(\M,\F)$ as well. One immediate factorization can be taken as
\beq
\label{eq:simpleB}
\vB(\bv_i,\vI_i) = (\bv_i\times^*) \, \vI_i\,,
\eeq
while another proposed by Niemeyer and Slotine \cite{Niemeyer91} is
\beq
\vB( \bv_i, \vI_i) = \frac{1}{2}\left( (\bv_i \times^*) \vI_i + (\vI_i \bv_i  \crff) - \vI_i  (\bv_i \times)    \right)\,.
\label{eq:GunterB}
\eeq
Viewing $\vB( \cdot, \cdot )$ as a function from $\M\times\I$ to $L(\M,\F)$, the definitions above are bi-linear in their arguments.

Both of these factorizations
of the body-level velocity-product terms 
have additional properties that will be of interest for factorization of the system-level Coriolis terms $\vC(\vq,\vqd)\vqd$. Note that the rate of change in the spatial inertia of body $i$ is given in a coordinate-free sense as \cite{Featherstone08}
$\dot{\vI}_i =  (\bv_i \times^*) \vI_i - \vI_i (\bv_i \times)\,.$
 
Letting $\vB_i = \vB(\bv_i,\vI_i)$, factorizations \eqref{eq:simpleB} and \eqref{eq:GunterB} satisfy
\[
\bv \bullet {\big[}  (\dot{\vI}_i - 2 \vB_i) \bv {\big]} = 0 \quad \forall \bv\in \M\,,
\]
which is equivalent to the condition that the matrix representation of $\dot{\vI}_i - 2 \vB_i$ is skew-symmetric, or equivalently that
\beq
\dot{\vI}_i = \vB_i + \vB_i\T\,.
\label{eq:IdotCondition}
\eeq

\begin{definition}[Admissible Body-Level Factorization]
\label{def:admissible_body}
A function $\mathbf{B}(\cdot,\cdot): \M \times \I \rightarrow L(\M, \F)$ is said to be an {\em admissible body-level factorization} if~  $\forall~ \bv\in\M, \vI \in \mathcal{I}$
\begin{enumerate}
\item $ \vB(\bv,\mathbf{I})\bv = \bv\times^* \vI \v$ and 
\item $ (\bv\times^*) \vI - \vI (\bv \times)= \vB(\bv,\vI) + \vB(\bv,\vI)\T$\,.
\end{enumerate}
\end{definition}

\vspace{-1ex}
\section{A Numerical Method for Computing $\vC$}

This section presents a new numerical method to compute the Coriolis matrix whereby adopting an admissible body-level factorization $\vB$ leads to an admissible system-level factorization $\vC$. Niemeyer and Slotine \cite{Slotine88
,Niemeyer90,Niemeyer91} were the first to establish a link between the factorization of body-level terms in the Recursive-Newton-Euler Algorithm (RNEA) and admissibility of the Coriolis matrix.  Lin et al.~\cite{Lin95} later provided general conditions on body-level factorizations that result in admissible $\vC$. Others have taken similar strategies \cite{Ploen99,DeLuca09,Wang10,WangH12} for problems in sensorless contact detection, passivity-based control, or adaptive control. DeLuca and Ferrajoli \cite{DeLuca09} are the only ones to provide an algorithm to compute $\vC$.
 Their approach requires $N$ calls to a modified RNEA, with a total computation complexity of $O(N^2)$. We instead introduce a tailored method for $\vC$ and reduce the computation complexity down to $O(Nd)$, enabling savings for branched systems.

\vspace{-1ex}
\subsection{Factoring the RNEA}
The RNEA computes the inverse dynamics of a system with two passes over its kinematic tree. The first pass moves outward from base to tips, recursively computing the spatial velocity and acceleration of each body \cite{Featherstone08} with \eqref{eq:velocity_prop} and
\begin{align}
\va_i = \va_{p(i)} + \vPhi_i \vqdd_i + \vPhid_i \vqd_i \label{eq:acc_rec}\,,
\end{align}
where $\vPhid_i = (\v_i \crm) \vPhi_i + \mathring{\vPhi}_i$. The term  $(\v_i \crm) \vPhi_i$ gives the derivative due to the joint moving, and $\mathring{\vPhi}_i$ the derivative due to the joint axes changing in local coordinates, defined by
\begin{equation}
{\vphantom{ \mathring{\vPhi}_i} }^i
{\big\{} \mathring{\vPhi}_i {\big \}} = \ddt {}^i \{  \vPhi_i \}\,. \nonumber
\end{equation}
For revolute joints with fixed axes, $\mathring{\vPhi}_i=\mathbf{0}$.

With this information, the net force on each body can be computed using \eqref{eq:bodyForce}. The RNEA backwards pass then sums up these forces over descendants, computing joint torques
\beq
\btau_i = \vPhi_i\T \sum_{\sumafter{k}{i}} \Bf_k = \vPhi_i\T \sum_{\sumafter{k}{i}} \left[ \vI_k \va_k + \bv_k \times^* \vI_k \bv_k\right]\,. \label{eq:RNEA}
\eeq
Using the derivation in Appendix B, \eqref{eq:RNEA} is reorganized as
\begin{align}
\btau_i &= \sum_{\sumrel{j}{i}} \vPhi_i\T \vI_{\ceil{ij}}^C \vPhi_j \vqdd_j + ( \vPhi_i \T \vI_{\ceil{ij}}^C \vPhid_j + \vPhi_i\T \vB_{\ceil{ij}}^C \vPhi_j )\, \vqd_j 
	\label{eq:ForExtraDerivation}
\end{align}
where composite quantities are defined by
\begin{align}
\vI_{\ceil{ij}}^C &= \sum_{\sumafter{k}{\ceil{ij}}} \vI_k \quad \textrm{and}\\
\vB_{\ceil{ij}}^C &= \sum_{\sumafter{k}{\ceil{ij}}} \vB_k( \bv_k, \vI_k)\,.
\label{eq:Bcomposite}
\end{align}
The composite inertia $\vI_j^C$ represents the total inertia of body $j$ and its descendants. Likewise, $\vB_j^C$ accounts for Coriolis and centripetal forces for body $j$ and its descendants. The mass matrix and a valid Coriolis matrix are then given by
\begin{align}
\vH_{ij} &= \vPhi_i\T \vI_{\ceil{ij}}^C \vPhi_j \quad \textrm{and} \\[.5ex]
\vC_{ij} &= \vPhi_i \T \vI_{\ceil{ij}}^C \vPhid_j + \vPhi_i\T \vB_{\ceil{ij}}^C \vPhi_j \label{eq:Cij_general}
\end{align}
when $i\sim j$, and  $\vH_{ij} = \vC_{ij} = \bzero$ otherwise.


\subsection{Recursive Algorithm for Computing $\C$}
Toward simplifying these expressions, when $i \before{j}$
\begin{align}
\vH_{ij} &= \vPhi_i \T \, \vI_j^C \, \vPhi_j\,, \label{eq:Hij}\\[.5ex]
\vC_{ij} &= \vPhi_i \T \, ( \vI_j^C \, \vPhid_j + \vB_j^C \, \vPhi_j) \, \textrm{,~and}
\label{eq:Cij}\\[.5ex]
(\vC_{ji})\T &= \vPhid{}_i\T \, \vI_j^C \, \vPhi_j + \vPhi_i\T \, ( \vB_j^C)\T \, \vPhi_j\,. \label{eq:Cji}
\end{align}
To enable computing all of these quantities recursively, let
\begin{align*}
\BF_{1,j} &= \vI_j^C \, \vPhid_j + \vB_j^C \, \vPhi_j\,, \\
\BF_{2,j} &= \vI_j^C \, \vPhi_j\,,~\textrm{and} \\
\BF_{3,j} &=  ( \vB_j^C)\T \, \vPhi_j\,.
\end{align*}
All of these terms can be computed with complexity $O(N)$, since all $\vI_j^C$ and $\vB_j^C$ can be computed with $O(N)$ cost via summing backward along the tree. Considering body $j$, for all $O(d)$ ancestors $i \before{j}$
\begin{align*}
\vH_{ij} & = \vPhi_i\T \, \BF_{2,j}\,, \\
\vC_{ij} &= \vPhi_i\T \, \BF_{1,j}\,,~\textrm{and}\\
\vC_{ji}&= \left(\vPhid{}_i\T\BF_{2,j} + \vPhi_i\T \BF_{3,j}\right)\T\,.
\end{align*}

These equations enable an $O(Nd)$ method to compute $\vH$ and $\vC$ as given in Algo.~\ref{alg:cor}. Since the algorithm runs on a computer, vectors and matrices are expressed in coordinates at each step. 
All quantities for body $i$ are expressed using a local body-fixed frame, and spatial transformation matrices $\XM{i}{p(i)}$ are used to transform quantities between frames.
For example, in coordinates, \eqref{eq:velocity_prop} takes the form
\[
{}^i\{\bv_i\} = \XM{i}{p(i)} {}^{p(i)}\{ \bv_{p(i)} \} + {}^i\left\{ \vPhi_i\right\} \vqd_i
\]
such that the matrix $\XM{i}{p(i)}$ changes the basis of a motion vector from frame $p(i)$ to frame $i$. Likewise, the matrix $\XM{i}{p(i)}\T$ provides a change of basis for spatial force vectors from frame $i$ to frame $p(i)$. 
Finally, for any operator $\mathbf{I} \in L(\M,\F)$, the congruence transform ${}^{p(i)}\{ \vI \} = \XM{i}{p(i)}\T  {}^i\{ \vI \}  \XM{i}{p(i)}$ changes the representation from frame $i$ to frame $p(i)$. This transform is often used with inertias \cite{Featherstone08}, but also provides the transform law for the body-level factorization terms $\vB_i$ and their associated composite quantities $\vB_i^C$. For cleanliness of presentation, the bracket notation $\{\cdot\}$ and specification of frames are omitted in the algorithm.

The structure of Algorithm \ref{alg:cor} is as follows. A forward sweep (lines 2-7) computes the velocity of each body and the initial composite terms $\vI_i^C$ and $\vB_i^C$. Other valid body-level factorizations may be used on line 6. The backward sweeps (lines 8-26) compute the entries of the Coriolis and mass matrices. 
The while loop (lines 16-23) computes entries of $\vH$, $\Hd$, and $\vC$ associated with body $j$ and propagates the computation down to all its predecessors.\footnote{Post-publication note: By comparison, the Coriolis matrix algorithm in Pinocchio \cite{carpentier2019pinocchio} expresses all spatial quantities in ground coordinates to limit coordinate transformations and improve efficiency. Its algorithm \cite{pinocchioweb} originally used the equivalent of $\vB_i^C = \bv_i \crf \vI_i$ on Line 6 herein. That algorithm was amended following the publication of this work to use the body-level factorization $\frac{1}{2}[ (\bv_i \times^*) \vI_i + (\vI_i \bv_i)\crff-   \vI_i (\bv_i \times )]$ and provide the Christoffel-consistent Coriolis matrix \cite{pinocchioEdit}. This change makes the algorithms effectively identical.\\[-.5ex]} 

\begin{algorithm}[t]
\caption{Coriolis Matrix Algorithm}
\setstretch{1.2}
\begin{algorithmic}[1]
\REQUIRE $\vq,\,\vqd$
\STATE $\bv_0 = \bzero$
\FOR{$i=1$ to $N$}
\STATE $\bv_i = \XM{i}{p(i)}\,\bv_{p(i)}+\vPhi_i \,\vqd_i$ 
\STATE $ \vPhid_i = (\bv_i \times) \vPhi_i + \mathring{\vPhi}_i$ 
\STATE $\vI_i^C = \vI_i$ 
\STATE $\vB_i^C = \frac{1}{2}[ (\bv_i \times^*) \vI_i + (\vI_i \bv_i)\crff-   \vI_i (\bv_i \times )]$ 
\ENDFOR
\FOR{$j=N$ to $1$}
\STATE $\BF_1 = \vI_j^C\, \vPhiDot_j + \vB_j^C \, \vPhi_j  $ \STATE $\BF_2 = \vI_j^C\, \vPhi_j$ 
\STATE $\BF_3 = (\vB_j^C)\T \vPhi_j$ 
\STATE $\vC_{jj} = \vPhi{}_j\T\, \BF_1$ 
\STATE $\vH_{jj} = \vPhi{}_j\T \, \BF_2$
\STATE $\dot{\vH}_{jj} = \dot{\vPhi}{}_j\T\, \BF_2 + \vPhi_j\T \, (\BF_1 + \BF_3) $
\STATE $i=j$
\WHILE{$p(i)>0$}
\STATE $\BF_1\!=\!\XM{i}{p(i)}\T\,\BF_1$;\,$\BF_2\!=\!\XM{i}{p(i)}\T\,\BF_2$;\, $\BF_3\!=\!\XM{i}{p(i)}\T\,\BF_3$
\STATE $i = p(i)$ 
\STATE $\vC_{ij} = \vPhi{}_i\T\, \BF_1$ 
\STATE $\vC_{ji} = (\vPhiDot\vphantom{\vPhi}_i\T\, \BF_2 + \vPhi{}_i\T \vF_3 )\T$
\STATE $\vH_{ij} = (\vH_{ji})\T = \vPhi{}_i\T\, \BF_2$ 
\STATE $\dot{\vH}_{ij} = (\dot{\vH}_{ji})\T = \vPhid{}_i\T\, \BF_2  + \vPhi{}_i\T(\BF_1+\BF_3)$ 
\ENDWHILE
\STATE $\vI_{p(j)}^C = \vI_{p(j)}^C + \XM{j}{p(j)}\T \, \vI_j^C \, \XM{j}{p(j)} $\\[.5ex]
\STATE $\vB_{p(j)}^C = \vB_{p(j)}^C + \XM{j}{p(j)}\T \, \vB_j^C \, \XM{j}{p(j)} $
\ENDFOR
\RETURN $\vH$, $\dot{\vH}$, $\vC$
\end{algorithmic}
\label{alg:cor}
\end{algorithm}

\begin{proposition}[Algorithm for an Admissible Coriolis Factorization]
\label{prop:admissible}
Suppose that Algo.~1 uses an admissible body-level factorization $\vB(\bv,\vI)$ on Line 6. Then, the resulting $\C(\q,\qd)$ is an admissible Coriolis factorization. 
\end{proposition}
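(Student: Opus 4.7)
The plan is to verify the two requirements of Definition~\ref{def:AdmissibleCoriolisFactorization} directly against the closed-form expressions \eqref{eq:Hij}--\eqref{eq:Cji} that Algorithm~\ref{alg:cor} implements. First, for validity, I would argue that the backward-sweep/while-loop structure is simply an $O(Nd)$ scheduling of the formula \eqref{eq:Cij_general} for $\vC_{ij}$: lines~12 and~19 produce the case $\ceil{ij}=j$, while line~20 (after transposition and using the self-adjointness of $\vI_j^C$) produces the case $\ceil{ji}=j$. Since the derivation of \eqref{eq:Cij_general} in Appendix~B invokes only property~1 of Definition~\ref{def:admissible_body} (namely $\vB(\bv,\vI)\bv = \bv\crf\vI\bv$), it goes through for any admissible body-level $\vB$, so $\vC(\q,\qd)\qd$ agrees with the velocity-product term of the RNEA. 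This disposes of clause~1 of Definition~\ref{def:AdmissibleCoriolisFactorization}.

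The core work is clause~2, for which I would first isolate the following composite identity, obtained by summing property~2 of Definition~\ref{def:admissible_body} over all descendants $k\succeq j$:
\begin{equation*}
\vB_j^C + (\vB_j^C)\T = \sum_{k\succeq j}\bigl[(\bv_k\crf)\vI_k - \vI_k(\bv_k\crm)\bigr] = \sum_{k\succeq j}\dot{\vI}_k = \dot{\vI}_j^C\,.
\end{equation*}
This is the only place property~2 is invoked; everything downstream is bookkeeping. Note in particular that $\vI_j^C$ remains self-adjoint for all time, so its time derivative $\dot{\vI}_j^C$ is self-adjoint as well, consistent with the right-hand side.

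With this identity in hand, I would verify the block-form equality $\dot{\vH}_{ij} = \vC_{ij} + \vC_{ji}\T$ for each pair $(i,j)$, which is equivalent to $\dot{\vH} - 2\vC$ being skew-symmetric. When $i\not\sim j$, both sides vanish identically. For $i\before j$ (the case $j\before i$ follows by interchange), substituting \eqref{eq:Cij} and \eqref{eq:Cji} gives
\begin{equation*}
\vC_{ij} + \vC_{ji}\T = \vPhid_i\T\vI_j^C\vPhi_j + \vPhi_i\T\vI_j^C\vPhid_j + \vPhi_i\T\bigl[\vB_j^C + (\vB_j^C)\T\bigr]\vPhi_j,
\end{equation*}
and the composite identity collapses the bracketed factor to $\dot{\vI}_j^C$, at which point the sum is precisely the product-rule expansion of $\dot{\vH}_{ij} = \frac{d}{dt}(\vPhi_i\T\vI_j^C\vPhi_j)$. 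The diagonal case $i=j$ is analogous, reproducing line~14 of the algorithm. Thus $\dot{\vH} = \vC + \vC\T$, so $\dot{\vH} - 2\vC = \vC\T - \vC$ is skew-symmetric and $\vC$ is admissible.

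The main obstacle is the composite identity of the second paragraph: conceptually it is a one-line summation, but it is the hinge that lifts the body-level skew-symmetry condition to the system level, and it depends on interpreting $\dot{\vI}_j^C$ in the coordinate-free sense of Section~\ref{sec:SVA} (so that the $(\bv_k\crf)\vI_k - \vI_k(\bv_k\crm)$ expression for $\dot{\vI}_k$ is valid). Once this identity is in place, the rest is a mechanical product-rule verification.
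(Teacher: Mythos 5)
Your proposal is correct and follows essentially the same route as the paper: the hinge in both is summing the body-level identity \eqref{eq:IdotCondition} over descendants to get $\dot{\vI}_j^C = \vB_j^C + (\vB_j^C)\T$, then applying the product rule to $\vH_{ij} = \vPhi_i\T \vI_j^C \vPhi_j$ to conclude $\dot{\vH}_{ij} = \vC_{ij} + (\vC_{ji})\T$. You are somewhat more explicit than the paper about clause~1 (validity), which the paper dispatches implicitly via the Appendix~B derivation, but this is a difference of exposition rather than of method.
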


\begin{proof} To show that Algo.~\ref{alg:cor} gives an admissible factorization $\vC$, it remains to show that $\dot{\H} = \vC + \vC\T$. From \eqref{eq:Hij}
\[
\dot{\vH}_{ij} = \vPhid{}_i \T \vI_j^C \vPhi_j + \vPhi_i \T \left( \dot{\vI}_j^C \vPhi_j + \vI_j^C \vPhid_j \right)\,.
\]
For an admissible body-level factorization, \eqref{eq:IdotCondition} holds, and, correspondingly, $\dot{\vI}_j^C = \vB_j^C + (\vB_j^C)\T$. It then follows that
\[
\dot{\vH}_{ij} = \vPhi_i\T( \BF_{1,j} + \BF_{3,j}) + \vPhid{}_i\T\BF_{2,j} = \vC_{ij}+\left(\vC_{ji}\right)\T\\[-3.5ex]
\]
\end{proof}

\begin{remark}
This proof is conceptually equivalent to \cite[pp.~2325]{Lin95}, with \eqref{eq:Cij_general} sharing the same form as (19c) in \cite{Lin95}. Via comparison, our coverage of kinematic branching is unique, but the main novelty is the algorithm for $\vC$. 
\end{remark}

\begin{proposition}[Algorithm for the Christoffel-Consistent Factorization]
Consider a kinematic tree where each joint is single DoF and satisfies $\mathring{\vPhi}_i = \mathbf{0}$. Suppose Algo.~\ref{alg:cor} uses the body-level factorization given on Line 6. Then, Algo.~\ref{alg:cor} returns the Christoffel-consistent factorization $\vCc$.
\label{prop:Cc}
\end{proposition}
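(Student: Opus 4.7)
The strategy is to characterize $\vCc$ by two properties and verify both for the output of Algo.~\ref{alg:cor}. First, if a valid factorization $\vC(\vq,\vqd)$ is linear in $\vqd$, so that $[\vC]_{ij} = \sum_k \gamma_{ijk}(\vq)\,\dot q_k$, and if the coefficients satisfy the symmetry $\gamma_{ijk} = \gamma_{ikj}$, then $\gamma_{ijk} = \Gamma_{ijk}$ and hence $\vC = \vCc$. This follows because $\vC\vqd = \vCc\vqd$ forces $\sum_{j,k}(\gamma_{ijk}-\Gamma_{ijk})\dot q_j \dot q_k = 0$ for all $\vqd$, so the symmetric parts of $\gamma$ and $\Gamma$ in their last two indices agree; combined with $\gamma_{ijk}=\gamma_{ikj}$ and $\Gamma_{ijk}=\Gamma_{ikj}$, this forces equality. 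Validity is already available: Proposition~\ref{prop:admissible} shows that Algo.~\ref{alg:cor} with the Niemeyer--Slotine body-level factorization produces an admissible (hence valid) $\vC$. So the only thing to verify is the symmetry property.

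For the linearity step, I would note that under $\mathring{\vPhi}_i=\vzero$ and single-DoF joints, the velocity recursion gives $\bv_m = \sum_{\sumbefore{l}{m}} \vPhi_l\,\dot q_l$, and hence both $\vPhid_j=(\bv_j\crm)\vPhi_j$ and the body-level factorization $\vB_m = \tfrac12\bigl[(\bv_m\crf)\vI_m + (\vI_m\bv_m)\crff - \vI_m(\bv_m\crm)\bigr]$ are linear in $\vqd$. Writing $\vK_l^{(m)} \defn \tfrac12\bigl[(\vPhi_l\crf)\vI_m + (\vI_m\vPhi_l)\crff - \vI_m(\vPhi_l\crm)\bigr]$ and using $\vB^C_j = \sum_{\sumafter{m}{j}} \vB_m$, the expressions in \eqref{eq:Cij} and \eqref{eq:Cji} for $\vC_{ij}$ become explicit linear combinations in $\vqd$, from which the coefficient $\gamma_{ijk}$ can be read off. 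The algorithm yields $\gamma_{ijk} = \vzero$ unless $i\sim j$ and $k$ is related to some body in the summation range defining $\vB^C_{\ceil{ij}}$, which after unpacking means $k\sim\ceil{ij}$.

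The core of the proof is then verifying $\gamma_{ijk} = \gamma_{ikj}$. I would handle this by case analysis on the relative positions of $i,j,k$ in the tree (e.g., $i\before{j}\before{k}$, $i\before{k}\before{j}$, $k\before{i}\before{j}$, and the cases with branching at $i$). In each case, one matches the $\vPhi_l = \vPhi_k$ or $\vPhi_l = \vPhi_j$ contributions coming from the $\vK_l^{(m)}$ terms to contributions from the $\vI_j^C\vPhid_j$ and $\vI_k^C\vPhid_k$ pieces. The symmetrization in the Niemeyer--Slotine choice is precisely what pairs these terms: the $(\vPhi_l\crf)\vI_m$ and $-\vI_m(\vPhi_l\crm)$ pieces of $\vK_l^{(m)}$ are designed so that, once contracted with $\vPhi_i\T(\cdot)\vPhi_j$ or $\vPhi_i\T(\cdot)\vPhi_k$, the required swap $j\leftrightarrow k$ leaves the sum invariant after relabeling the summation variable $m$. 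The main obstacle is the bookkeeping in the branching cases, where $\ceil{ij}\neq\ceil{ik}$ and the summation ranges $\sumafter{m}{\ceil{ij}}$ and $\sumafter{m}{\ceil{ik}}$ differ; here one needs to argue that terms outside the common range $\sumafter{m}{\max(\ceil{ij},\ceil{ik})}$ vanish because the corresponding $\vPhi_l$ contractions involve a joint that is not an ancestor of $m$. Once this case analysis is complete, the characterization from the first paragraph identifies the algorithm's output with $\vCc$.
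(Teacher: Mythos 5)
Your plan is sound and takes a genuinely different route from the paper. The paper proves this proposition essentially by reference: it cites the closed-form expression for $\vCc_{ij}$ from Niemeyer's thesis, asserts its generalization from serial chains to trees under the partial order, and checks that \eqref{eq:Cij_general} matches that formula. You instead give a self-contained argument: characterize $\vCc$ as the unique valid factorization whose entries are linear in $\vqd$ with coefficients symmetric in the last two indices (this uniqueness step is correct --- polarization of $\vC\vqd=\vCc\vqd$ gives $\gamma_{ijk}+\gamma_{ikj}=\Gamma_{ijk}+\Gamma_{ikj}$, and the two symmetries then force $\gamma_{ijk}=\Gamma_{ijk}$), then verify linearity and symmetry for the algorithm's output directly. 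Your symmetry check reduces, as you indicate, to identities of the form $\vPhi_i\T\vB(\vPhi_k,\vI)\vPhi_j=\vPhi_i\T\bigl[\vI(\vPhi_j\crm)+\vB(\vPhi_j,\vI)\bigr]\vPhi_k$, which hold term by term from the definition of $\crff$, the antisymmetry of $\crm$, and the self-adjointness of $\vI$ --- precisely the manipulations the paper performs later in passing from \eqref{eq:chris3} to \eqref{eq:chris3_simplified}. Two small corrections to your sketch: (i) the branching bookkeeping you flag as the main obstacle is benign, since $\gamma_{ijk}\neq 0$ forces $i$, $j$, $k$ to be mutually related and hence to lie on a single root-to-leaf path, so $\ceil{ij}$, $\ceil{ik}$, $\ceil{jk}$ all exist and no summation-range mismatch actually occurs; (ii) do not drop the contribution $\vPhi_i\T\vI^C_{\ceil{ij}}(\vPhi_k\crm)\vPhi_j$ coming from the $\vI^C_{\ceil{ij}}\vPhid_j$ term when $k\preceq j$ --- it is needed on one side of each symmetry identity. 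What your route buys is independence from the external, unreproduced derivation in Niemeyer's thesis, and it makes the closed-form symbol expressions \eqref{eq:chris1}--\eqref{eq:chris3_simplified} a byproduct of the proof rather than a consequence of the proposition.
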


\begin{proof}
A formula for $\vCc_{ij}$ is given in \cite[Eq.~(3.19)]{Niemeyer90} for serial chains, and generalizes to branched trees by replacing integer ordering $i\le j$ with the partial order $i \preceq j$. The original derivation is lengthy and so its generalization is omitted here. Using the factorization on Line 6 of Algo.~\ref{alg:cor}, it can be verified that \eqref{eq:Cij_general}  matches \cite[Eq.~(3.19)]{Niemeyer90} . 
\end{proof}

\begin{remark}
\label{rem:GunterCompare}
Note that \cite{Niemeyer90} provides an algorithm to compute $\vCc(\vq,\vqd) \vqd_r$ where $\vqd_r\in \R^n$ is a reference joint velocity not necessarily equal to $\vqd$. While we use the formula for $\vCc_{ij}$ in \cite{Niemeyer90} as the basis for our proof, a distinction of our work is the ability to compute $\vCc$ itself.
\end{remark}

\newcommand{\qh}{\changeCoord{\q}}
\newcommand{\qhd}{\dot{\qh}}
\subsection{Changes of Generalized Coordinates}

Algorithm~\ref{alg:cor} only applies when joint variables \eqref{eq:gencoords} are used for the generalized coordinates. A change of coordinates can be applied to the output of Algo.~\ref{alg:cor} in other cases. Let $\changeCoord{\q}$ represent a different choice of coordinates with 
\begin{equation}
\A = \frac{\partial \q}{\partial \vqh} \text{~~~such~that~~~}  A_{ij} = \pq{i}{j}\,.
\label{eq:coordChange}
\end{equation}
Using $\vqd = \vA \dot{\qh}$ and $\vqdd = \A \ddot{\qh} +\dot{\A} \qhd$ in \eqref{eq:eom}, and multiplying both sides of \eqref{eq:eom} by $\A\T$, it follows that the quantities
\begin{align}
\Hhat &= \A\T \H \A\,, \label{eq:Htransform} \\
\vChat &= \A\T \C \A + \A\T \H \dot{\A}\,,  
\label{eq:Ctransform} \\
\changeCoord{\g} &= \A\T \g\,,\textrm{~and}  \label{eq:gtransform}\\
\changeCoord{\btau} &= \A\T \btau \label{eq:tautransform}
\end{align}
lead to transformed equations of motion
\begin{equation}
\Hhat \, \ddot{\qh} + \vChat\, \qhd + \changeCoord{\g} = 
\changeCoord{\btau}\,.\nonumber
\end{equation}

\begin{proposition}[Admissible Factorization Under a Change of Coordinates \cite{Lin95}]
\label{prop:AdmissibleTransform}
 If the matrix $\C(\vq,\vqd)$ is an admissible factorization for the coordinates $\q$, then $\vChat(\qh, \qhd)$ given by \eqref{eq:Ctransform} is an admissible factorization for the coordinates $\qh$.
\end{proposition}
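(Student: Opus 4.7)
The plan is to verify the two requirements of Definition \ref{def:AdmissibleCoriolisFactorization} separately for $\vChat$: first that it is a valid factorization in the new coordinates, and second that $\vHhatDot - 2\vChat$ is skew symmetric.

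For validity, I would start from the given EOM $\vH \vqdd + \vC \vqd = \btau$ (ignoring gravity), substitute $\vqd = \vJhat \vqhd$ and $\vqdd = \vJhat \ddot{\vqh} + \vJhatDot \vqhd$, and premultiply by $\vJhat\T$. This yields $\vHhat \ddot{\vqh} + (\vJhat\T \vC \vJhat + \vJhat\T \vH \vJhatDot)\vqhd = \changeCoord{\btau}$, so $\vChat \vqhd$ with $\vChat$ defined by \eqref{eq:Ctransform} reproduces the velocity-product term of the EOM in the new coordinates. Since the derivation does not use any specific property of $\vC$ beyond validity, $\vChat$ is a valid factorization whenever $\vC$ is.

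For the skew-symmetry condition, I would differentiate $\vHhat = \vJhat\T \vH \vJhat$ to obtain
\begin{equation}
\vHhatDot = \vJhatDot\T \vH \vJhat + \vJhat\T \vHd \vJhat + \vJhat\T \vH \vJhatDot,\nonumber
\end{equation}
and then subtract $2\vChat = 2\vJhat\T\vC\vJhat + 2\vJhat\T\vH\vJhatDot$ to get
\begin{equation}
\vHhatDot - 2\vChat = \vJhat\T(\vHd - 2\vC)\vJhat + \bigl(\vJhatDot\T\vH\vJhat - \vJhat\T\vH\vJhatDot\bigr).\nonumber
\end{equation}
The first term is skew symmetric because $\vHd - 2\vC$ is (by admissibility of $\vC$) and congruence by $\vJhat$ preserves skew-symmetry. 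The second term has the form $X - X\T$ with $X = \vJhatDot\T\vH\vJhat$, using $\vH\T = \vH$, and is therefore skew symmetric as well. Adding two skew-symmetric matrices yields a skew-symmetric matrix, completing the argument.

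There is no real obstacle here; the only point requiring care is recognizing that $\vJhat\T \vH \vJhatDot$ is not itself symmetric, but that its combination with $\vJhatDot\T \vH \vJhat$ in $\vHhatDot - 2\vChat$ cleanly produces a $X - X\T$ pair once the symmetry of $\vH$ is used. I would also briefly note that the argument is independent of which particular admissible $\vC$ one starts with, so the result applies to both the Christoffel-consistent factorization and the factorization produced by Algorithm \ref{alg:cor}.
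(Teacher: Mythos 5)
Your proof is correct. The paper itself does not prove this proposition---it defers to Lin et al.~\cite{Lin95}---but your argument is the standard one and matches what the paper does sketch: the validity half reproduces the derivation of \eqref{eq:Htransform}--\eqref{eq:tautransform} given in the text immediately preceding the proposition, and the skew-symmetry half is a clean congruence computation in which the cross terms $\dot{\A}\T\vH\A - \A\T\vH\dot{\A}$ correctly collapse to an $X - X\T$ pair via the symmetry of $\vH$. No gaps.
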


A remarkable property of the transformation law \eqref{eq:Ctransform} is that it transforms the unique Christoffel-consistent factorization in one set of coordinates to the unique Christoffel-consistent factorization in the transformed coordinates.
\begin{theorem}[Christoffel-Consistent Factorization under a Change of Coordinates]
\label{thm:CcTransform}
Suppose $\vCc(\vq,\vqd)$ is the unique factorization given by the Christoffel symbols in the coordinates $\q$ via \eqref{eq:ChristoffelC}. Consider a change of coordinates to $\qh$ with $\A$ defined as in \eqref{eq:coordChange}. Then, the unique Coriolis factorization given by the Christoffel symbols in the coordinates $\qh$ is
\begin{equation}
\vCchat = \A{}\T \vCc \A + \A{}\T \H \dot{\A}\,.
\label{eq:TransformedCc}
\end{equation}
\end{theorem}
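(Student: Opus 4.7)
The plan is to characterize Christoffel-consistency as a tensor-level symmetry and close the argument via uniqueness. Writing any matrix $\vC(\qh,\qhd)$ that is linear in $\qhd$ as $[\vC]_{ij} = \sum_k c_{ijk}(\qh)\,\dot{\hat q}_k$, I would first observe that Christoffel-consistency is equivalent to the index symmetry $c_{ijk} = c_{ikj}$: validity of $\vC$ pins down only the symmetrization $\tfrac12(c_{ijk}+c_{ikj})$ (since only this part survives contraction with $\dot{\hat q}_j\dot{\hat q}_k$), so additionally requiring $c_{ijk}=c_{ikj}$ forces $c_{ijk}=\Gammahat_{ijk}$. The theorem therefore reduces to showing that the right-hand side of \eqref{eq:TransformedCc} is (i) a valid Coriolis factorization in $\qh$-coordinates and (ii) has a coefficient tensor that is symmetric in its last two indices.

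For step (i), I would invoke Proposition~\ref{prop:AdmissibleTransform} applied to $\C = \vCc$. This requires the preliminary observation that $\vCc$ itself is admissible in $\q$-coordinates, which follows from $\Gamma_{ijk}+\Gamma_{jik} = \partial H_{ij}/\partial q_k$ and hence $\vCc + \vCc\T = \dot{\H}$. Proposition~\ref{prop:AdmissibleTransform} then yields admissibility, and so validity, of $\A\T\vCc\A + \A\T\H\dot{\A}$ in the $\qh$-coordinates directly.

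For step (ii), I plan to expand $[\A\T \vCc \A]_{ij} = A_{ai} A_{bj} A_{ck} \Gamma_{abc}\,\dot{\hat q}_k$ using $\dot q_c = A_{ck}\dot{\hat q}_k$, and $[\A\T \H \dot{\A}]_{ij} = A_{li} H_{lm}\,(\partial A_{mj}/\partial \hat q_k)\,\dot{\hat q}_k$ using $\dot A_{mj} = (\partial A_{mj}/\partial \hat q_k)\,\dot{\hat q}_k$. Under the swap $j \leftrightarrow k$, the first coefficient is preserved by relabeling the dummy indices $b\leftrightarrow c$ combined with the symmetry $\Gamma_{abc} = \Gamma_{acb}$ (which follows from symmetry of $\H$, as noted after \eqref{eq:GammaEq}); the second coefficient is preserved by the symmetry of mixed partials $\partial^2 q_m/\partial\hat q_j \partial\hat q_k$. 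This gives $c_{ijk}=c_{ikj}$, and combined with step (i) and the uniqueness characterization above, the proof concludes.

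The main obstacle is conceptual rather than technical: one must recognize that Christoffel-consistency is an index-level symmetry of the coefficient tensor, so the uniqueness conclusion cannot be read off from $\vC\qhd$ alone but requires the contraction argument via validity. The remaining work is routine index bookkeeping, with all the analytic content of the validity step packaged into Proposition~\ref{prop:AdmissibleTransform}.
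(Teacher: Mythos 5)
Your proof is correct, but it follows a genuinely different route from the paper's. The paper simply quotes the classical transformation law for Christoffel symbols under a change of coordinates, contracts it with $\dot{\changeCoord{q}}_k$, and recognizes the result as the right-hand side of \eqref{eq:TransformedCc}. You instead avoid invoking that law: you characterize $\vCchat$ as the unique factorization that is both valid and has a velocity-coefficient tensor symmetric in its last two indices, then verify both properties for $\A\T\vCc\A+\A\T\H\dot{\A}$ --- validity via Proposition~\ref{prop:AdmissibleTransform} (or, more directly, via the transformed equations of motion stated just before it), and the index symmetry via $\Gamma_{abc}=\Gamma_{acb}$ together with equality of mixed partials. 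All the ingredients check out: $\Gamma_{ijk}+\Gamma_{jik}=\partial M_{ij}/\partial q_k$ does give admissibility of $\vCc$, and your uniqueness lemma (valid $+$ symmetric coefficients $\Rightarrow$ Christoffel-consistent) is exactly the fact the paper itself uses implicitly in \eqref{eq:CfromPartials}. What your approach buys is self-containedness: your step (ii) computation in effect re-derives the Christoffel transformation law as a byproduct, whereas the paper imports it as an external fact; the cost is that you lean on Proposition~\ref{prop:AdmissibleTransform}, which the paper states without proof, though only its validity component is actually needed and that is established in the surrounding text. The underlying index algebra is essentially the same in both arguments; the difference is purely in the logical scaffolding.
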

\begin{proof}
The transformation law for Christoffel symbols is
\begin{align}
\Gammahat_{ijk} = \sum_{\alpha,\beta, \gamma} \pq{\alpha}{i} \pq{\beta}{j} \pq{\gamma}{k} \Gamma_{\alpha \beta \gamma} + \sum_{\alpha, \beta} \pq{\alpha}{i} \frac{\partial^2 q_\beta}{\partial \changeCoord{q}_j \changeCoord{q}_k}  M_{\alpha \beta} \nonumber\,.
\end{align}
Multiplying both sides by $\dot{\changeCoord{q}}_k$ and summing over $k$ we have:
\begin{align*}
\Cchat_{ij} &= \sum_{\alpha, \beta, \gamma, k} \pq{\alpha}{i} \pq{\beta}{j} \left(\pq{\gamma}{k} \dot{\changeCoord{q}}_k \right) \Gamma_{\alpha \beta \gamma} \\
&\qquad \qquad \qquad
+\sum_{\alpha, \beta, k} \pq{\alpha}{i} \left( \frac{\partial^2 q_\beta}{\partial \changeCoord{q}_j \changeCoord{q}_k} \dot{\changeCoord{q}}_k \right)  M_{\alpha \beta} \\ 
&=\sum_{\alpha, \beta, \gamma} \pq{\alpha}{i} \pq{\beta}{j} \Gamma_{\alpha \beta \gamma} \dot{q}_\gamma + \sum_{\alpha, \beta} \pq{\alpha}{i} \left(\frac{\rm d}{{\rm d}t}  \pq{\beta}{j} \right)  M_{\alpha\beta} \\
&= \sum_{\alpha, \beta} A_{\alpha i} \Cc_{\alpha\beta} A_{\beta j} + A_{\alpha i} M_{\alpha \beta} \dot{A}_{\beta j}\\
&= \left[ \A\T \vCc \A + \A\T \H \dot{\A}\right]_{ij}\\[-8ex]
\end{align*}
\end{proof}

\subsection{Other Remarks}
\label{sec:remarks}

\begin{remark}
Due to the symmetry of the Christoffel symbols, any valid factorization is related to $\vCc$ via
\begin{equation}
\vCc(\vq,\vqd) = \frac{1}{2} \frac{\partial}{\partial \qd} \left[\vC(\vq,\qd) \qd \right]\,.
\label{eq:CfromPartials}
\end{equation}
Thus, considering \eqref{eq:eom} and \eqref{eq:CfromPartials}, Algo.~1 can be used to efficiently compute $\partial \btau / \partial \qd$ since $\partial \btau / \partial \qd = 2 \vCc$. 
\end{remark}

\begin{remark}
Proposition~\ref{prop:admissible} also has applicability for alternate methods of computing $\vC$ \cite{Garofalo13}. Consider body Jacobians $\vJ_k \in L(\R^n, \M)$ satisfying $\bv_k = \vJ_k\, \dot{\q}$. 
From \eqref{eq:RNEA}, it can be shown that
\[
\btau = \sum_k \vJ_k\T \vI_k \vJ_k \vqdd + \vJ_k\T\left[ \vB(\bv_k,\vI_k)\vJ_k + \vI_k \dot{\vJ}_k \right] \vqd\,.
\]
With any factorization in Algo.~\ref{alg:cor} Line 6, its outputs satisfy:
\begin{align}
\vH &= \sum_k \vJ_k\T \vI_k \vJ_k \quad \textrm{and}\\
\vC &= \sum_k \vJ_k\T \left[ \vB(\bv_k,\vI_k) \vJ_k + \vI_k \dot{\vJ}_k \right]\,.
\label{eq:CGlobal}
\end{align}
Here, each $\vJ_k$ is a linear mapping to coordinate-free velocities. Via comparison, in \cite{Garofalo13} ${}^k\{\vJ_k\} \in L(\R^n, \R^6)$ is used to express the velocity in coordinates. 
Defining $\mathring{\vJ}_k$ as the operator satisfying $^k\{\mathring{\vJ}_k\} = \ddt {}^k \{\vJ_k\}$ one then has that 
\begin{align*}
\dot{\vJ}_k = \mathring{\vJ}_k + (\bv_k \times) \vJ_k\,.
\end{align*}
Extending results in \cite{Garofalo13}, if $\vB(\cdot,\cdot)$ is admissible then so is
\begin{equation}
\vC = \sum_k \vJ_k\T \left[ \vB(\bv_k,\vI_k) \vJ_k + \vI_k \mathring{\vJ}_k + \vI_k (\bv_k\times) \vJ_k \right] \label{eq:gianluca}
\end{equation}
When used directly, however, this formula has cost $O(N^3)$.
\end{remark}

\newcommand{\vnu}{\boldsymbol{\nu}}
\begin{remark}
This paper has thus far required generalized coordinates. 
Suppose instead that generalized speeds $\boldsymbol{\nu} \in \Rn$ are adopted \cite{kane1985dynamics} with 
$\A(\q) = \frac{\partial{\qd}}{\partial \boldsymbol{\nu}}$.
Eqs.~\eqref{eq:Htransform}-\eqref{eq:tautransform} still lead to valid equations of motion and Prop.~\ref{prop:AdmissibleTransform} generalizes to this case. If one adopts an admissible body-level factorization and defines $\vJ_k$ such that $\bv_k = \vJ_k \,\vnu$, then \eqref{eq:CGlobal} and \eqref{eq:gianluca} also lead to an admissible Coriolis factorization. In the special case when generalized speeds $\boldsymbol{\nu}_i$ are adopted for each joint~$i$ (i.e., such that $\bv_i = \bv_{p(i)} + \vPhi_i \vnu_i$), Algo.~\ref{alg:cor} can be used directly and Prop.~\ref{prop:admissible} remains applicable as well. 
\end{remark}

\begin{remark}
 Consider a floating-base system such as a humanoid or quadrotor. The use of the CoM position $\mathbf{p}_{CoM} \in \R^3$ in the generalized coordinates leads to decoupled equations of motion \cite{garofalo2015inertially,garofalo2018task} in this case. Consider coordinates $\q = [ \mathbf{p}_{CoM}\T, \q_2\T ]\T$ where  $\q_2$ is independent of the CoM position. Then, the mass matrix $\vH$ takes a block-diagonal form \cite{garofalo2015inertially}, as does the Coriolis matrix $\vCc$  
\begin{equation}
\H\!=\!\begin{bmatrix} m \mathbf{1}_{3} ~&~ \mathbf{0} \\ \mathbf{0} ~&~ \H_{22}(\q_2) \end{bmatrix}~~~\vCc \!=\!\begin{bmatrix} \mathbf{0} ~&~ \mathbf{0} \\ \mathbf{0} ~&~ \vCc_{22}(\qh_2, \dot{\qh}_2) \end{bmatrix}\,,
\label{eq:sparseC}
\end{equation}
where $m \in \R$ is the system's mass. The sparsity of $\vCc$ is due to the fact that, $\Gamma_{ijk} = 0$ when $i$,$j$, or $k$ is 1, 2, or 3. 

For floating-base systems, it remains desirable to adopt generalized speeds so that the angular velocity of a main body can be used to avoid parametrization singularities. Consider $\boldsymbol{\nu} = \left[\dot{\mathbf{p}}_{CoM}\T,~ \boldsymbol{\nu}_2\T \right]\T$, where $\boldsymbol{\nu}_2$ completes the generalized speeds and is independent of the CoM velocity. Then, the Coriolis matrix transformation \eqref{eq:Ctransform} applied to $\vCc$ in \eqref{eq:sparseC} gives an admissible factorization with the same sparsity. Alternately, if \eqref{eq:CGlobal} or \eqref{eq:gianluca} is used with $\vB(\bv,\vI)$ from \eqref{eq:GunterB} and Jacobians satisfying $\bv_k = \vJ_k \vnu$, then the same sparse Coriolis matrix is obtained directly. See \cite{Mishra20} for a discussion of other structured factorizations.   
\end{remark}

\section{Computing Christoffel Symbols}

This section builds upon the previous one toward an algorithm for the Christoffel symbols of the first kind. We consider the case when all joints are single DoF and $\mathring{\vPhi}_i=0$ for all $i$. It is further assumed that $\vB(\bv,\vI)$ matches the factorization in \eqref{eq:GunterB}. In this case, Prop.~\ref{prop:Cc} ensures that Algo.~\ref{alg:cor} returns $\vCc$, with the formula for $\vCc_{ij}$ recalled as
\begin{equation}
\vCc_{ij} = \vPhi_i \T \, ( \vI_{\ceil{ij}}^C \, \vPhid_j + \vB_{\ceil{ij}}^C \, \vPhi_j)\,,
\label{eq:Cij_again}
\end{equation}
when $i\sim j$ and $\vCc_{ij}=0$ otherwise. This factorization serves as the starting point for the Christoffel symbols algorithm since \eqref{eq:ChristoffelC} implies $\Gamma_{ijk} = \nicefrac{\partial \vCc_{ij}}{\partial \dot{q}_k}$.

Since the formula for $\vCc_{ij}$ in \eqref{eq:Cij_again} includes effects from velocities $\v$, rates of change in joint axes $\vPhid$, and body-level factorizations $\vB$, we consider the partials of each of these terms individually before combining them together. Noting that $\bv_j = \sum_{\sumbefore{i}{j}} \vPhi_i \, \dot{q}_i$, it follows that
\begin{equation}
\frac{\partial \v_j}{\partial \dot{q}_k} = \begin{cases} \vPhi_k & \textrm{if $j \after{k}$} \\
                     \mathbf{0} & \textrm{otherwise.}
        \end{cases}
\label{eq:partial_v}
\end{equation}
Since $\vPhiDot_j = \bv_j \times \vPhi_j$ it then follows similarly that
\begin{align}
\frac{\partial \vPhiDot_j}{\partial \dot{q}_k} &= \begin{cases} \vPhi_k\times \vPhi_j & \textrm{if $j \after{k}$} \\
                     \mathbf{0} & \textrm{otherwise.}
        \end{cases}
\label{eq:dPhidq}
\end{align}

Moving to the next terms in \eqref{eq:Cij_again}, consider how $\vB_j^C$ changes with $\dot{q}_k$, and recall that
 $\vB(\bv, \vI)$ is bi-linear in its arguments. Combining this property with \eqref{eq:Bcomposite} and \eqref{eq:partial_v}
\begin{align*}
\frac{\partial \vB_{\ceil{ij}}^C}{\partial \dot{q}_k} &= \sum_{\ell \after{\ceil{ij}}} \vB\left( \frac{\partial \v_\ell}{\partial \dot{q}_k} , \vI_\ell \right)\\
&=\sum_{\ell \in \{ \ell \after{\ceil{ij}} \textrm{~and~} \ell \after{k} \} } \vB(\vPhi_k, \vI_\ell)\\[1ex]
 &= \begin{cases} \vB(\vPhi_k, \vI_{\ceil{ijk}}^C) & \textrm{if}~\ceil{ij} \sim k \\
 \bzero & \textrm{otherwise\,,}
 \end{cases}
\end{align*}
where $\ceil{ijk} = \ceil{ \ceil{ij} k}$ gives the body closest to the leaves for the mutual relatives $i$, $j$, and $k$.

With these elements, the symbol $\Gamma_{ijk}$ is constructed as
\[
\Gamma_{ijk}= \frac{\partial \vCc_{ij}}{\partial \dot{q}_k} = \vPhi_i\T \vI_{\ceil{ij}}^C \frac{\partial \vPhid_j}{\partial \dot{q}_k} + \vPhi_i\T \frac{\partial \vB_{\ceil{ij}}^C }{\partial \dot{q}_k} \vPhi_j\,.
\]
Without loss of generality, since $\Gamma_{ijk}=\Gamma_{ikj}$ we consider $j\preceq k$. 
It then follows that when $i$, $j$, and $k$ are mutually related (i.e., $i \sim j$, $j \sim k$, and $i\sim k$) then
\begin{equation}
\Gamma_{ijk} = \Gamma_{ikj} = \vPhi_i\T \vB(\vPhi_k, \vI_{\ceil{ijk}}^C) \vPhi_j
\label{eq:closed_form_symbols}
\end{equation}
 and $\Gamma_{ijk}=\Gamma_{ikj}=0$ otherwise. Alternate closed-form expressions for the symbols $\Gamma_{ijk}$ can be found in \cite{Brockett93,Park95,muller2003lie,Muller16,Wei94}.
 Considering the case when $i\preceq j \preceq k$, \eqref{eq:closed_form_symbols} gives
\begin{align}
\Gamma_{ijk} &= \Gamma_{ikj} = \vPhi_i\T \vB(\vPhi_k, \vI_k^C) \vPhi_j\,, \label{eq:chris1}\\
\Gamma_{jik} &= \Gamma_{jki} = \vPhi_j\T \vB(\vPhi_k, \vI_k^C) \vPhi_i\,, \textrm{~and} \label{eq:chris2} \\
\Gamma_{kij} &= \Gamma_{kji} = \vPhi_k\T \vB(\vPhi_j, \vI_k^C) \vPhi_i\,. \label{eq:chris3}
\end{align}
By using $\vB(\bv,\vI)$ from \eqref{eq:GunterB} and applying the cross-product property \eqref{eq:motionAndForceCrossRelationship}, 
\eqref{eq:chris3} can then be rearranged as
\begin{align}
\Gamma_{kij} &= \frac{1}{2} \vPhi_i\T \!\left[  \vI_k^C (\vPhi_k \times) - (\vPhi_k \times^*) \vI_k^C  + (\vI_k^C \vPhi_k  \,\crff) \right]\! \vPhi_j \nonumber \\
&= \vPhi_i\T\left[ (\vI_k^C \vPhi_k\, \crff) - \vB(\vPhi_k, \vI_k^C) \right]\vPhi_j\,.
\label{eq:chris3_simplified}
\end{align}
~\\[-3ex]

To turn these formulas into a recursive algorithm, let
\begin{align*}
\vBtmp_k = \vB(\vPhi_k, \vI_k^C)  ~~~{\textrm{and}}~~~
\vD_k = ( \vI_k^C \vPhi_k \crff) - \vBtmp_k\,.
\end{align*}
These quantities can be computed for all $k$ with $O(N)$ total cost. Considering \eqref{eq:chris1}, \eqref{eq:chris2}, and \eqref{eq:chris3_simplified}, we likewise define
\begin{align}
\vF_{1,jk} = \vBtmp_k \vPhi_j, ~~
\vF_{2,jk} = \vBtmp_k\T \vPhi_j,~{\textrm{and}}~~ 
\vF_{3,jk} = \vD_k \vPhi_j, \nonumber
\end{align}
which can be computed for all $j\preceq k$ in $O(Nd)$ total time. Finally, these intermediate quantities can be used to form
\begin{align*}
\Gamma_{ijk} &= \Gamma_{ikj} = \vPhi_i\T \vF_{1,jk}\,, \\
\Gamma_{jik} &= \Gamma_{jki} = \vPhi_i\T \vF_{2,jk}\,, \textrm{~and} \\
\Gamma_{kij} &= \Gamma_{kji} = \vPhi_i\T \vF_{3,jk}\,, 
\end{align*}
which can be computed for all ${i\preceq j \preceq k}$ with $O(Nd^2)$ total cost. Algo.~\ref{alg:chris} carries out these calculations in body-fixed coordinates, including all necessary transformation matrices.

\begin{algorithm}[t]
\caption{Christoffel Symbols Algorithm}
\begin{algorithmic}[1]
\setlength{\itemindent}{.01cm}
\addtolength{\algorithmicindent}{.01cm}
  
\setstretch{1.15}
\REQUIRE $\vq$
\FOR{$i=1$ to $N$}
\STATE $\vI_i^C = \vI_i$ 
\ENDFOR
\FOR{$k=N$ to $1$}
\STATE $\vBtmp=  \frac{1}{2}[ (\vPhi_k \times^*) \vI_k^C + (\vI_k^C \vPhi_k)\crff-   \vI_k^C (\vPhi_k \times )] $
\STATE $\vD =  (\vI_k^C\vPhi_k \crff) - \vBtmp $
\STATE $j=k$
\WHILE{$j>0$}
\STATE $\BF_1 = \vBtmp \, \vPhi_j$;~ $\BF_2 = \vBtmp\T \, \vPhi_j$;~$\BF_3 = \vD \, \vPhi_j$
\STATE $i = j$
\WHILE{$i>0$}
\STATE $\Gamma_{ijk} = \Gamma_{ikj} = \vPhi_i\T \BF_1$  
\STATE $\Gamma_{jik} = \Gamma_{jki} = \vPhi_i\T \BF_2$  
\STATE $\Gamma_{kij} = \Gamma_{kji} = \vPhi_i\T \BF_3$  
\STATE $\BF_1\!=\!\XM{i}{p(i)}\T \BF_1$;\, $\BF_2\!=\!\XM{i}{p(i)}\T \BF_2$;\, $\BF_3\!=\!\XM{i}{p(i)}\T \BF_3$\!\!
\STATE $i = p(i)$ 
\ENDWHILE
\STATE $\vBtmp = \XM{j}{p(j)}\T \, \vBtmp \, \XM{j}{p(j)}$ \\[.5ex]
\STATE $\vD = \XM{j}{p(j)}\T \, \vD \, \XM{j}{p(j)}$ 
\STATE $j = p(j)$ 
\ENDWHILE
\STATE $\vI_{p(k)}^C = \vI_{p(k)}^C + \XM{k}{p(k)}\T \, \vI_k^C \, \XM{k}{p(k)} $ 
\ENDFOR
\RETURN $\boldsymbol{\Gamma}$
\end{algorithmic}
\label{alg:chris}
\end{algorithm}

In Algo.~\ref{alg:chris}, the forward sweep (lines 1-3) initializes the composite inertia for each body. 
The nested while loops (lines 8-21) compute all the entries of $\Gamma_{ijk}$. There are two while loops to cover all permutations of indices $i \preceq j$ for a given $k$. Lines 18 and 19 propagate $\vBtmp$ and $\vD$ down the tree, and line 22 updates the composite inertias. 

\section{Results}

The algorithms were first prototyped in MATLAB and then implemented in C/C++ as an extension to RBDL \cite{Felis16}. The MATLAB implementations are available open source:
\[
\footnotesize
\text{\href{ https://github.com/ROAM-Lab-ND/spatial_v2_extended}{\tt www.github.com/ROAM-Lab-ND/spatial\_v2\_extended}}
\]
The Coriolis algorithm was first compared against RNEA for verification. From the equations of motion \eqref{eq:eom}, when gravity is 0 and $\vqdd=\mathbf{0}$, the torque vector returned by RNEA should equal the product of $\C(\q,\qd)$ and $\qd$. Randomized inputs for $\q$ and $\vqd$ were used to call RNEA and Algo.~\ref{alg:cor} under these conditions, with each entry of $\q$ randomized over $[0,2\pi]$ rad and each entry of $\qd$ randomized over $[0,10]$ rad/s. The RNEA torque vector was then compared to the product $\vC(\q,\vqd) \vqd$, with $\vC$ from Algo.~\ref{alg:cor}.  In the case of a 10 DoF serial kinematic chain,  a maximum error of $1.3\times10^{-11}$\,N$\cdot$m was observed over 100 trials. This check was repeated with 20 and 30 bodies, providing similar results (accurate up to $1.4 \times 10^{-9}$\,N$\cdot$m in the worst case) and ensuring the validity of the Coriolis matrix returned. Next, it was verified that Algo.~\ref{alg:cor} provides an admissible $\vC$ by checking that $\Hd=\C+\C \T$ for random inputs $\vq$ and $\vqd$ as before. A maximum error of $1.8\times10^{-12}$ N$\cdot$m$\cdot$s  was observed for the entries of $\Hd-\C-\C \T$ over the same set of tests.

The Christoffel symbols algorithm was then verified against the Coriolis matrix algorithm. The body-level factorization \eqref{eq:GunterB} was used to compute $\vCc(\vq,\vqd)$ using Algo.~\ref{alg:cor}. When Algo.~\ref{alg:chris} is called with the same $\q$, the returned symbols $\Gamma_{ijk}(\q)$  should satisfy $C_{ij} = \sum_k \Gamma_{ijk} \dot{q}_k$. Across the same random trails as before, the maximum residuals for $C_{ij} - \sum_k \Gamma_{ijk} \dot{q}_k$ were $1.6 \times 10^{-11}$ N$\cdot$m$\cdot$s, further supporting the correctness of the algorithms.

The computational cost of the algorithms was then benchmarked. Figs.~\ref{fig:serial}, \ref{fig:branch}, and \ref{fig:bipquad} show the required computation time for serial kinematic chains, binary trees, bipeds, and quadrupeds with varying numbers of bodies. Each data point represents an average over 100 random trials on an Intel i7 CPU (3.2GHz) running Ubuntu 18.04. For serial kinematic chains, the computational cost to compute the Christoffel symbols scaled approximately as $O(N^3)$ while the cost for the Coriolis matrix scaled as $O(N^2)$ (Fig.~\ref{fig:serial}). For binary trees (branched mechanisms with a branching factor of 2), the depth goes as $O(\log N)$, and so the time to compute the Coriolis matrix should increase as $O(Nd) = O(N \log N )$ while that of the Christoffel symbols should scale as $O(N d^2) = O(N (\log N)^2 )$. A polynomial fit to the log-log plots in Fig.~\ref{fig:branch} verifies that the order of the algorithms in this case is above $O(N)$ but below $O(N^2)$, in agreement with the theoretical complexity analysis. 
To further see the effects of branching, consider a quadruped topology with a main body and four serial-chain legs vs.~a biped topology with a main body and two serial-chain legs. For the same number of actuated joints, the algorithms are faster for a quadruped than a biped (Fig.~\ref{fig:bipquad}).

\begin{figure}[tbp]
\centerline{\includegraphics[width=.83 \columnwidth]{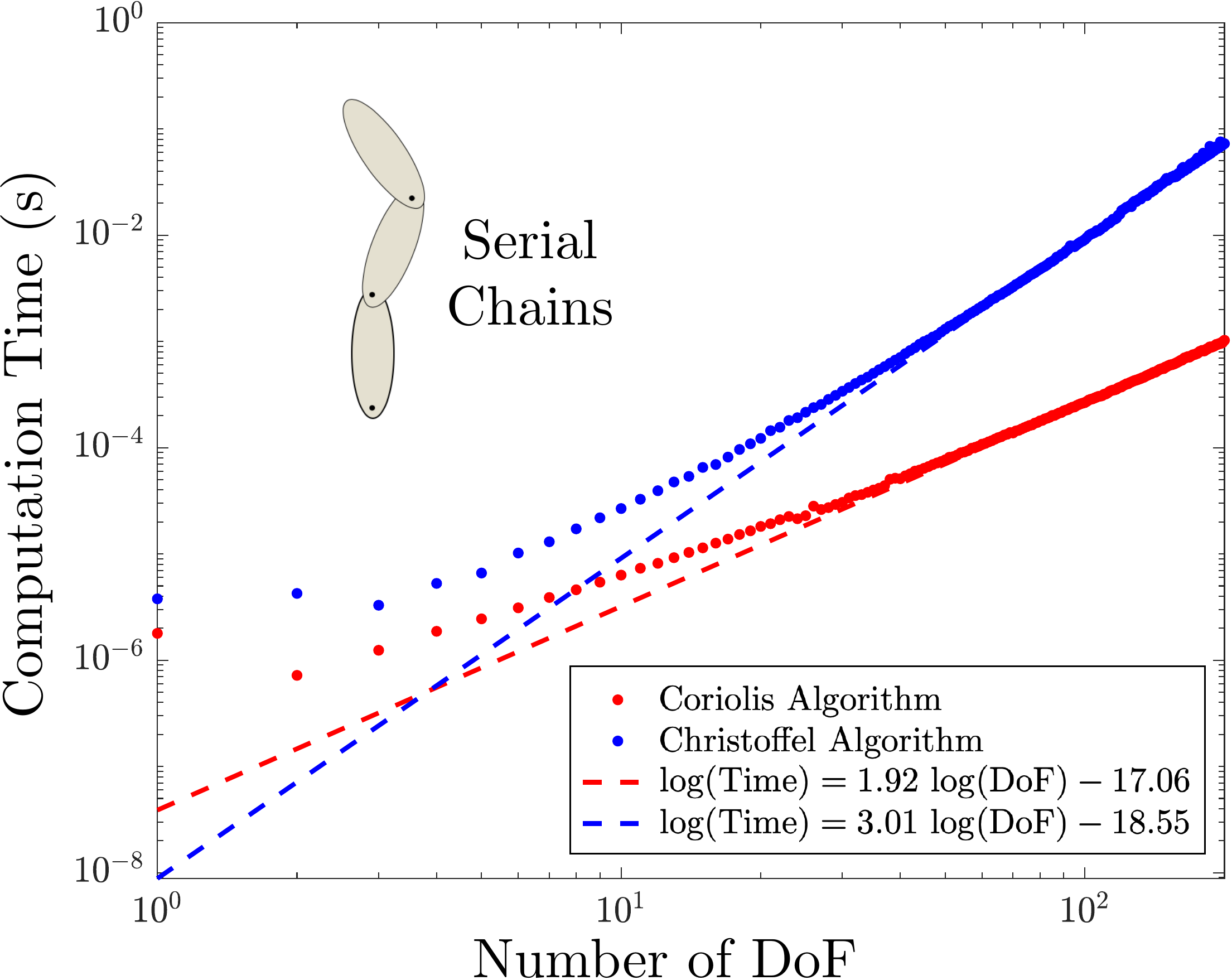}}
\caption{Computation times of Algo.~\ref{alg:cor} and \ref{alg:chris} versus number of DoF for serial chains connected by revolute joints.}
\label{fig:serial}
\end{figure}

\begin{figure}[tbp]
\centerline{\includegraphics[width=.83\columnwidth]{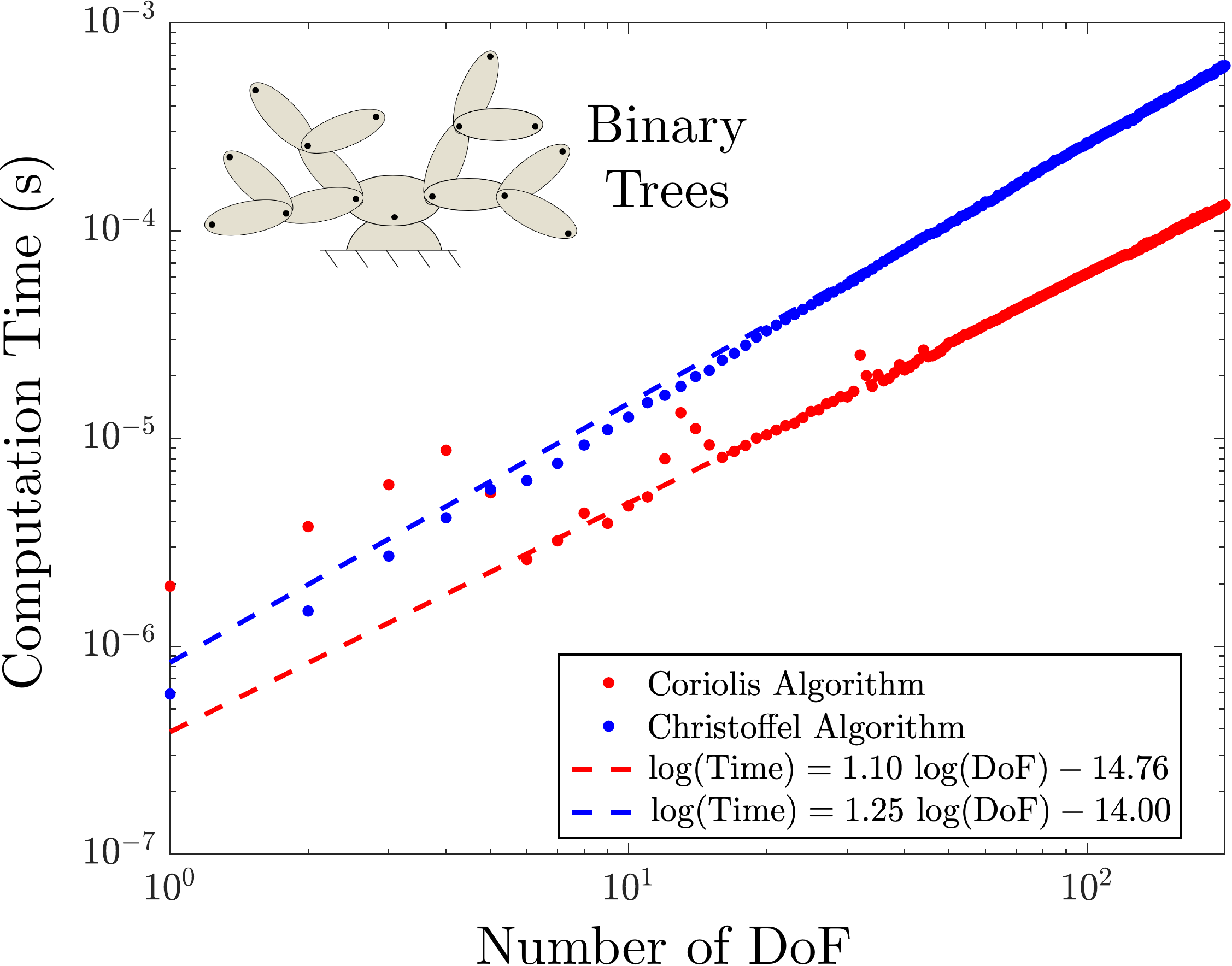}}
\caption{Computation times of Algo.~\ref{alg:cor} and \ref{alg:chris} versus number of DoF for binary trees connected by revolute joints.}
\label{fig:branch}
\end{figure}

\begin{figure}[!ht]
\centerline{\includegraphics[width = \columnwidth]{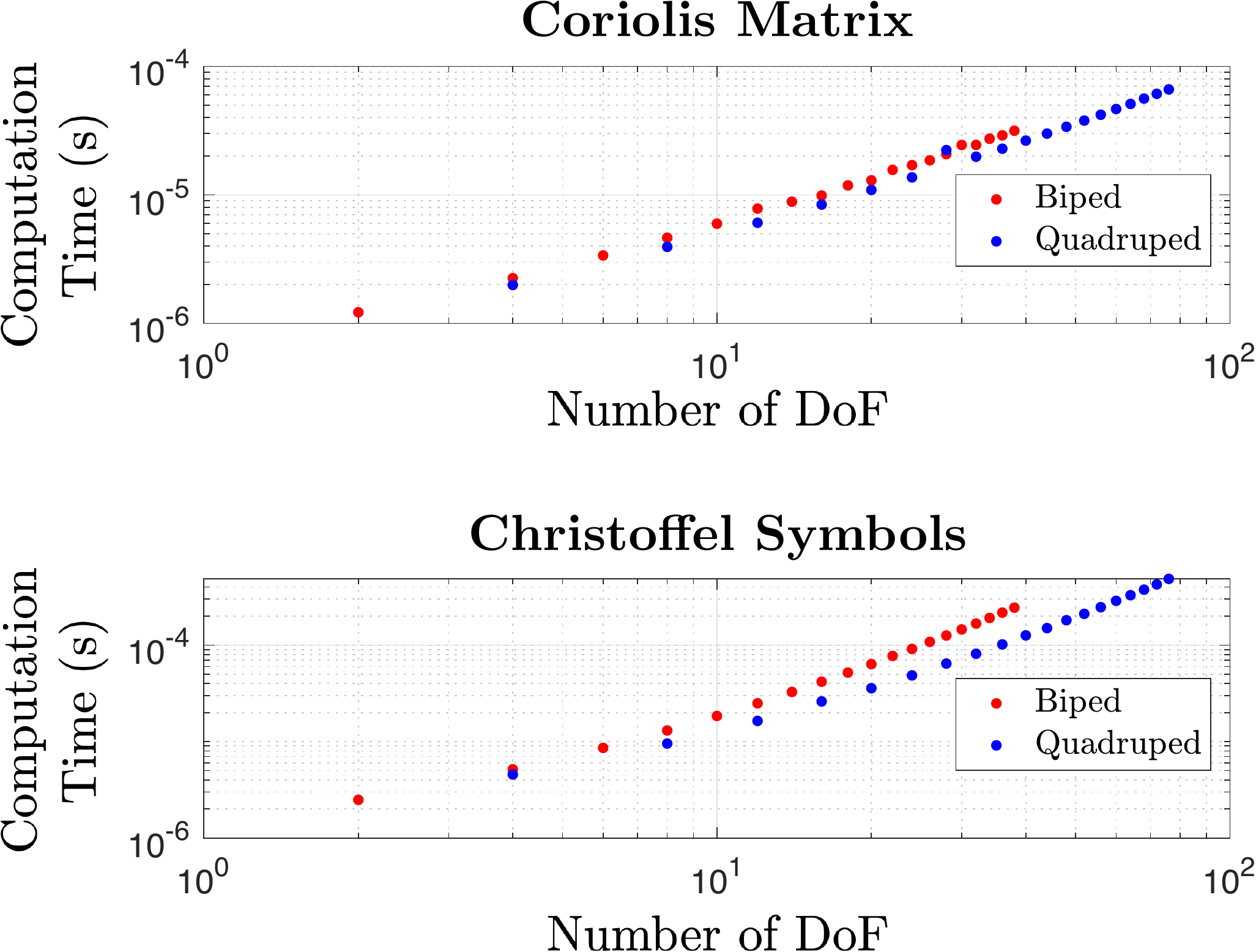}}
\caption{Computation times for Algorithms \ref{alg:cor} and \ref{alg:chris} versus number of joint DoF for biped and quadruped topologies.}
\label{fig:bipquad}
\end{figure}

In terms of run time, Algo.~\ref{alg:cor} and \ref{alg:chris} proved to be fast enough for online control loops that run at kHz rates. For rigid-body chains of 20 DoF, the algorithms take approximately 20 $\mu s$ to numerically evaluate the Coriolis matrix and 122 $\mu s$ to evaluate the Christoffel symbols (Table~\ref{tbl:res}). Again, shorter computation times for the Christoffel symbols occur with trees that have branching. For example, for a 20-DoF binary tree, all $\Gamma_{ijk}$ can be evaluated in 33 $\mu s$. 

\begin{table}[tb]
\caption{Computation time required for Algo.~\ref{alg:cor} and ~\ref{alg:chris} for selected rigid-body systems.}
\begin{center}
\vspace{-3ex}
\small
\begin{tabular}{|c|c|c|}
\cline{1-3} 
 & $\vC~ (\mu s)$ & $\Gamma_{ijk}~(\mu s)$ \\
\hline
Serial Chain (20 DoF) & 18 & 122 \\
\hline
Binary Tree (20 DoF)& 10 & 33 \\
\hline
Biped (20 actuated DoF) & 13 & 64 \\
\hline
Quadruped (20 actuated DoF)& 10 & 37 \\
\hline
\end{tabular}
\label{tbl:res}
\end{center}
\vspace{-5px}
\end{table}

The benefits of the algorithm can also be observed compared to when computing $\Gamma_{ijk}$ symbolically. A common strategy is to first obtain the mass matrix symbolically (e.g., using the Composite-Rigid-Body Algorithm with symbolic inputs \cite{Walker82,Featherstone08}) and then take its partial derivatives to form the Christoffel symbols via \eqref{eq:GammaEq}. These symbolic calculations become very complex for systems with more than just a few bodies. This approach was carried out using symbolic variables in MATLAB and was timed for binary trees with $N =$ 5, 10, and 15 bodies. This approach was compared to calling Algo.~\ref{alg:chris} with symbolic inputs (Table \ref{tbl:sym}). The two strategies were found roughly comparable but significantly slower than when calling Algo.~\ref{alg:chris} with numeric inputs. In this case, if a symbolic result is desired, our algorithm is preferred over the conventional formula \eqref{eq:GammaEq}, and these benefits increase with additional bodies. For serial chains, it was surprisingly found that evaluating the partials of $\vH$ symbolically was preferable to running Algo.~\ref{alg:chris} with symbolic inputs. However, beyond $N = 15$, both symbolic approaches take hours to run, and quickly thereafter do not complete within a day. This performance makes the numerical evaluation of the Christoffel symbols more remarkable. In the time it takes to run Algo.~\ref{alg:chris} symbolically for a 10-DoF planar serial chain ($26.8$ minutes), Algo.~\ref{alg:chris} can numerically evaluate all symbols $\Gamma_{ijk}$ more than ten million times.

\begin{table}[tb]
\caption{Computation times required to symbolically compute the Christoffel symbols for binary trees.}
\begin{center}
\vspace{-3ex}
\begin{tabular}{|c|c|c|c|}
\cline{1-4} 
 $N$ & $5$ & $10$ & $15$ 
 \\
\hline
Eq.~\eqref{eq:GammaEq}, Symbolic Partials of $\H$  & 1.23s & 7.83s & 27.1s 
\\
\hline
Algo.~\ref{alg:chris} with Symbolic Inputs & 0.97s & 3.11s & 5.88s 
\\\hline
\end{tabular}
\label{tbl:sym}
\end{center}
\vspace{-10px}
\end{table}

\section{Conclusions}

This paper developed efficient algorithms to numerically calculate the Coriolis matrix and its associated Christoffel symbols. Expressions for each entry in the Coriolis matrix were given in terms of composite quantities of the spatial inertia $\vI_i$ and  body-level Coriolis terms $\vB( \bv_i, \vI_i)$, and a recursive algorithm was derived based on these results. By considering partial derivatives of the expressions for $\C_{ij}$, closed-form expressions for the Christoffel symbols were presented, and a recursive algorithm to compute all the symbols was derived. The Coriolis matrix and Christoffel symbols algorithms have complexity $O(Nd)$ and $O(Nd^2)$, respectively, and proved to be fast. For a 20 DoF biped it took 13 $\mu$s to compute $\C$ and 64 $\mu$s for $\Gamma_{ijk}$. Due to the effects of branching, the algorithms are faster for a 20 DoF quadruped, where it took only 10 $\mu$s for $\C$ and 37 $\mu$s for $\Gamma_{ijk}$. Given the scalability and speed of these algorithms, we conclude that they are viable for implementation in real-time control loops and other dynamics applications.

\begin{acknowledgment}
The authors acknowledge NSF Grant CMMI 1835186 and ONR Award N0001420WX01278 (through a sub-award to Notre Dame) for partial support of this work. The authors thank Gianluca Garafalo and Christian Ott for stimulating discussions underpinning multiple remarks. The authors also gratefully acknowledge Jared Di Carlo for early work and discussions on the Christoffel symbols algorithm.
\end{acknowledgment}

%

\bibliographystyle{asmems4}

{\small

}
\pagebreak 

\appendix       
\section*{Appendix A: Spatial Vector Algebra in Coordinates}
\vspace{.5ex}
\noindent Consider body $i$ with spatial velocity $\v_i$ and frame $i$ attached to the body. The spatial velocity is expressed in the basis associated with frame $i$ as
\begin{align}
{}^i\{\bv_i\} &= \BM
    {}^i\{ \vom_i \} \\
    {}^i \{ \vv_i \}
\EM\,,\nonumber
\end{align}
where $\vom_i\in \mathbb{R}^3$ and $\vv_i\in \mathbb{R}^3$ are the angular velocity of frame $i$ and linear velocity of its origin \cite{Featherstone08}. Similarly, a spatial force vector $\Bf_i$ is expressed as
\begin{align}
^i\{ \Bf_i\} &= \BM
    ^i\{\vn_i\} \\
    ^i\{\vf_i\}
\EM \,, \nonumber
\end{align}
where $\vn_i$ is the moment about the origin of frame $i$ and $\vf_i$ is a linear force. 
The expression of motion vectors can be changed from frame $i$ to frame $j$ via multiplication by
\begin{align}
 \XM{j}{i} &= \BM
    \XR{j}{i} & \mathbf{0}   \\
    -\XR{j}{i}\bS(^{i}\{\bp_{j/i} \})  & \XR{j}{i}  
\EM, \nonumber
\end{align}
where $\XR{j}{i}\in \mathbb{R}^{3\times3}$ is the rotation matrix from frame $j$ to frame $i$, $^{i}\{\bp_{j/i}\} \in \mathbb{R}^3$ is the vector from the origin of frame $i$ to the origin of frame $j$, and $\bS(\{\bp\})$ is the skew-symmetric 3D cross-product matrix for $\{\bp\}=[p_x,p_y,p_z]^T \in \R^3$ as
\begin{align}
\bS(\{\bp\}) &= \left(\begin{smallmatrix}
  0 & -p_z & p_y  \\
    p_z & 0 & -p_x  \\
    -p_y & p_x & 0 
\end{smallmatrix}\right)
. \nonumber
\end{align}

\noindent In coordinates, the spatial cross-product matrix
\begin{align}
\{ (\bv\times) \} &= \BM
    \bS(\{\vom\}) & \boldmath{0}   \\
    \bS(\{\vv\})  & \bS(\{\vom\})  
\EM. \nonumber
\end{align}
In a similar fashion we can express the $\crff$ operator as
\begin{align}
\{ (\Bf\crff )\} &= \BM
    -\bS(\{\vn\}) & -\bS(\{\vf\})   \\
    -\bS(\{\vf\})  & \boldmath{0}  
\EM. \nonumber
\end{align}
The spatial inertia of body $i$ is expressed in coordinates as
\begin{align}
{}^i\{ \vI_i \} &= \BM
    {}^i\{\,\overline{\!\bI}_i\} & m_i\bS({}^i\{\bc_i\})   \\
    m_i\bS({}^i\{\bc_i\})\T  & m_i\bone_3 
\EM \nonumber
\end{align}
where $m_i$ is the mass of body $i$, ${}^i\{\bc_i\} \in \R^3$ is the vector to its CoM
, ${}^i\{\,\overline{\!\bI}_i\} \in \R^{3\times3}$ the Cartesian inertia tensor about the coordinate origin, and $\bone_3\in \mathbb{R}^{3\times3}$ is the identity matrix. 

\vspace{-1ex}
\section*{Appendix B: Supplemental Derivation of \eqref{eq:ForExtraDerivation}}
\vspace{.5ex}

\noindent Expanding $\va_k$ and $\v_k$ from \eqref{eq:RNEA}
\begingroup\makeatletter\def\f@size{9}\check@mathfonts
\begin{align}
\btau_i &= \sum_{\sumafter{k}{i}} \vPhi_i\T \vI_k \left( \sum_{j \preceq k} \vPhi_j \vqdd_j + \vPhid_j \vqd_j \right) + \vPhi_i\T \vB_k \left(\sum_{\sumbefore{j}{k}} \vPhi_j \vqd_j \right) \nonumber \\
&=\sum_{\sumafter{k}{i}} \sum_{\sumbefore{j}{k}} \vPhi_i\T \vI_k \vPhi_j \vqdd_j  \nonumber \\ &~~~~~~+ \left( \vPhi_i\T \vI_k \vPhid_j + \vPhi_i\T \vB_k \vPhi_j\right) \vqd_j \nonumber\,.
\end{align}
\endgroup
This expression sums $(k,j)$ over the set
\begin{align*}
S(i) &= \{ (k,j)~|~\sumafter{k}{i} \textrm{ and } \sumbefore{j}{k}\}\,\\
&= \{ (k,j) ~|~\sumrel{j}{i} \textrm{ and } \sumafter{k}{\ceil{ij}} \}\,.
\end{align*}
Applying this relationship, it follows that
\begin{align}
\btau_i &= \sum_{j \sim i} \sum_{\sumafter{k}{\ceil{ij}}} \vPhi_i\T \vI_k \vPhi_j \vqdd_j + \left( \vPhi_i\T \vI_k \vPhid_j + \vPhi_i\T \vB_k \vPhi_j\right) \vqd_j \nonumber \\
& = \sum_{j \sim i} \vPhi_i\T \vI_{\ceil{ij}}^C \vPhi_j \vqdd_j + \left( \vPhi_i \T \vI_{\ceil{ij}}^C \vPhid_j + \vPhi_i\T \vB_{\ceil{ij}}^C \vPhi_j \right) \vqd_j \nonumber\,.
\end{align}



\end{document}